\documentclass{article}

\usepackage{microtype}
\usepackage{graphicx}
\usepackage{subcaption}
\usepackage{booktabs} 
\usepackage{adjustbox}
\usepackage{colortbl} 
\usepackage{placeins}

\newcommand{\TableFont}{\scriptsize}
\newcommand{\TableColSep}{3pt}

\newenvironment{icmltable}[1][\columnwidth]
{%
  \begingroup
  \TableFont
  \setlength{\tabcolsep}{\TableColSep}%
  \renewcommand{\arraystretch}{\TableRowStretch}%
  \centering
  \begin{adjustbox}{max width=#1}%
}
{%
  \end{adjustbox}%
  \endgroup
}

\usepackage[colorlinks=true, citecolor=green, linkcolor=red, urlcolor=blue]{hyperref}

\usepackage[accepted]{icml2026}

\usepackage{caption}
\usepackage{amsmath}
\usepackage{amssymb}
\usepackage{mathtools}
\usepackage{amsthm}
\usepackage[capitalize,noabbrev]{cleveref}

\usepackage[utf8]{inputenc}
\usepackage[T1]{fontenc}
\usepackage{multirow}
\usepackage{float}
\usepackage{wrapfig}
\usepackage{paralist}
\usepackage[table,xcdraw]{xcolor}
\usepackage{natbib}
\usepackage{pifont}

\usepackage{etoolbox}
\makeatletter
\csundef{algorithm}
\csundef{endalgorithm}
\csundef{algorithm*}
\csundef{endalgorithm*}
\csundef{listofalgorithms}
\makeatother

\usepackage[ruled,vlined]{algorithm2e}
\SetAlgoInsideSkip{0.4em}          
\SetAlgoSkip{0.6em}
\SetAlCapSkip{0.3em}

\newcommand{\WrapFigInTextSep}{6pt}    
\newcommand{\WrapFigColumnSep}{8pt}    
\newcommand{\WrapFigCaptionSkip}{2pt}  

{%
  \setlength{\intextsep}{\WrapFigInTextSep}%
  \setlength{\columnsep}{\WrapFigColumnSep}%
  \begin{wrapfigure}{#1}{#2}
    \captionsetup{font=small,skip=\WrapFigCaptionSkip}
}
{%
  \end{wrapfigure}
}

\usepackage[most]{tcolorbox}

\newtcolorbox{qbox}{
  colback=gray!10, colframe=black, fonttitle=\bfseries, title=Question,
  boxrule=0.5pt, arc=4pt, left=4pt, right=4pt, top=2pt, bottom=2pt
}
\newtcolorbox{abox}{
  colback=green!5, colframe=green!50!black, fonttitle=\bfseries, title=Answer,
  boxrule=0.5pt, arc=4pt, left=4pt, right=4pt, top=2pt, bottom=2pt
}
\newtcolorbox{graybox}{
  colback=gray!10, colframe=black!20, boxrule=0pt, arc=3pt,
  left=4pt, right=4pt, top=2pt, bottom=2pt, enhanced,
  sharp corners=downhill, before skip=6pt, after skip=6pt
}
\newtcolorbox{redbox}{
  colback=red!5, colframe=red!50!black, boxrule=0pt, arc=3pt,
  left=4pt, right=4pt, top=2pt, bottom=2pt, enhanced,
  sharp corners=downhill, before skip=6pt, after skip=6pt
}
\newtcolorbox{greenbox}{
  colback=green!5, colframe=green!40!black, boxrule=0pt, arc=3pt,
  left=4pt, right=4pt, top=2pt, bottom=2pt, enhanced,
  sharp corners=downhill, before skip=6pt, after skip=6pt
}

\def\beq#1\beq{\begin{equation}\begin{aligned}#1\end{aligned}\end{equation}}

\newcommand{\bea}{\begin{eqnarray}}
\newcommand{\eea}{\end{eqnarray}}

\theoremstyle{plain}
\newtheorem{theorem}{Theorem}[section]

\newtheorem{lemma}[theorem]{Lemma}

\theoremstyle{definition}

\theoremstyle{remark}

\definecolor{table-blue}{RGB}{173, 216, 230}

\icmltitlerunning{Adaptive Efficient Rollout Optimization for Group-based RL}

\begin{document}









\twocolumn[
\icmltitle{Train Less, Learn More: Adaptive Efficient\\
Rollout Optimization for Group-Based Reinforcement Learning}

\icmlsetsymbol{intern}{*}

\begin{icmlauthorlist}
\icmlauthor{Zhi Zhang}{ucla,intern}
\icmlauthor{Zhen Han}{aws}
\icmlauthor{Costas Mavromatis}{aws}
\icmlauthor{Qi Zhu}{aws}
\icmlauthor{Yunyi Zhang}{aws}
\icmlauthor{Sheng Guan}{aws}
\icmlauthor{Dingmin Wang}{aws}
\icmlauthor{Xiong Zhou}{aws}
\icmlauthor{Shuai Wang}{aws}
\icmlauthor{Soji Adeshina}{aws}
\icmlauthor{Vassilis Ioannidis}{aws}
\icmlauthor{Huzefa Rangwala}{aws}
\end{icmlauthorlist}

\icmlaffiliation{ucla}{University of California, Los Angeles}
\icmlaffiliation{aws}{Amazon Web Services}

\vspace{-3pt}
{\center
\textsuperscript{1}University of California, Los Angeles\quad
\textsuperscript{2}Amazon Web Services\par
}
\vspace{-10pt}

\icmlcorrespondingauthor{Zhen Han}{zhenhz@amazon.com}
\icmlcorrespondingauthor{Costas Mavromatis}{mavrok@amazon.com}

\icmlkeywords{Reinforcement Learning, Group-based RL, Efficiency}

\vskip 0.3in
]


\printAffiliationsAndNotice{%
\textsuperscript{*}Work done during an internship at Amazon Web Services (AWS); \texttt{zzh237@ucla.edu}.\\[-1pt]
Author contacts:\\[-2pt]
\begingroup
\footnotesize\ttfamily
\hyphenchar\font=`\-
\{zhenhz,\allowbreak mavrok,\allowbreak qzhuamzn,\allowbreak zhyunyi,\allowbreak shguan,\allowbreak wdimmy\}\allowbreak@\allowbreak amazon.com\\
\{xiongzho,\allowbreak wshui,\allowbreak adesojia,\allowbreak ivasilei,\allowbreak rhuzefa\}\allowbreak@\allowbreak amazon.com
\endgroup
\\[-1pt] 
}

\begin{abstract}
Reinforcement learning (RL) plays a central role in Large Language Model (LLM) post-training, among existing approaches, Group Relative Policy Optimization (GRPO) has become one of the most widely adopted methods, especially in RL with verifiable rewards (RLVR)-style fine-tuning.
In GRPO, each query prompts the LLM to generate a group of rollouts with a fixed group size $N$.  
When all rollouts in a group share the same outcome---either all correct or all incorrect---the group-normalized advantages become zero, yielding no gradient signal and wasting fine-tuning compute.
To mitigate this inefficiency, we introduce \textbf{Adaptive Efficient Rollout Optimization} (AERO). AERO addresses this by using an adaptive rollout strategy, 
applying selective rejection to  strategically prune rollouts, and maintaining a  Bayesian posterior to prevent zero-advantage dead zones. 
Across three model configurations (Qwen2.5-Math-1.5B, Qwen2.5-7B, and Qwen2.5-7B-Instruct), AERO significantly improves compute efficiency without sacrificing performance. 
Under the same total rollout budget, AERO reduces total training compute by $\sim$\textbf{48\%}  
while shortening wall-clock time per step by $\sim$\textbf{45\%} on average. 
Despite the substantial reduction in compute, AERO matches or improves
Pass@8 and Avg@8 over GRPO, demonstrating that it is a practical, scalable, and compute-efficient strategy for RL-based LLM alignment.
\end{abstract}

\newcommand{\costas}[1]{{\color{purple} #1}}
\newcommand{\zhen}[1]{{\color{pink} #1}}
\newcommand{\zach}[1]{{\color{orange} #1}}

\vspace{-0pt}
\section{Introduction}

RLVR has become a central paradigm for LLMs with factual and reasoning accuracy ~\citep{shao2024deepseekmath, liu2025understanding, yu2025dapo}.
A typical RLVR pipeline consists of two key stages: (1) the \emph{rollout phase}, where multiple candidate responses are sampled from the current policy, and (2) the \emph{training phase}, where verifiable rewards (e.g., correctness of answers) are used to update the model via policy gradients. 

\newcommand{\ZeroFigH}{3.0cm} 
\begin{figure}[t]
  \centering

  \begin{subfigure}[t]{0.44\linewidth}
    \centering
    \includegraphics[height=\ZeroFigH,keepaspectratio, width=\linewidth]{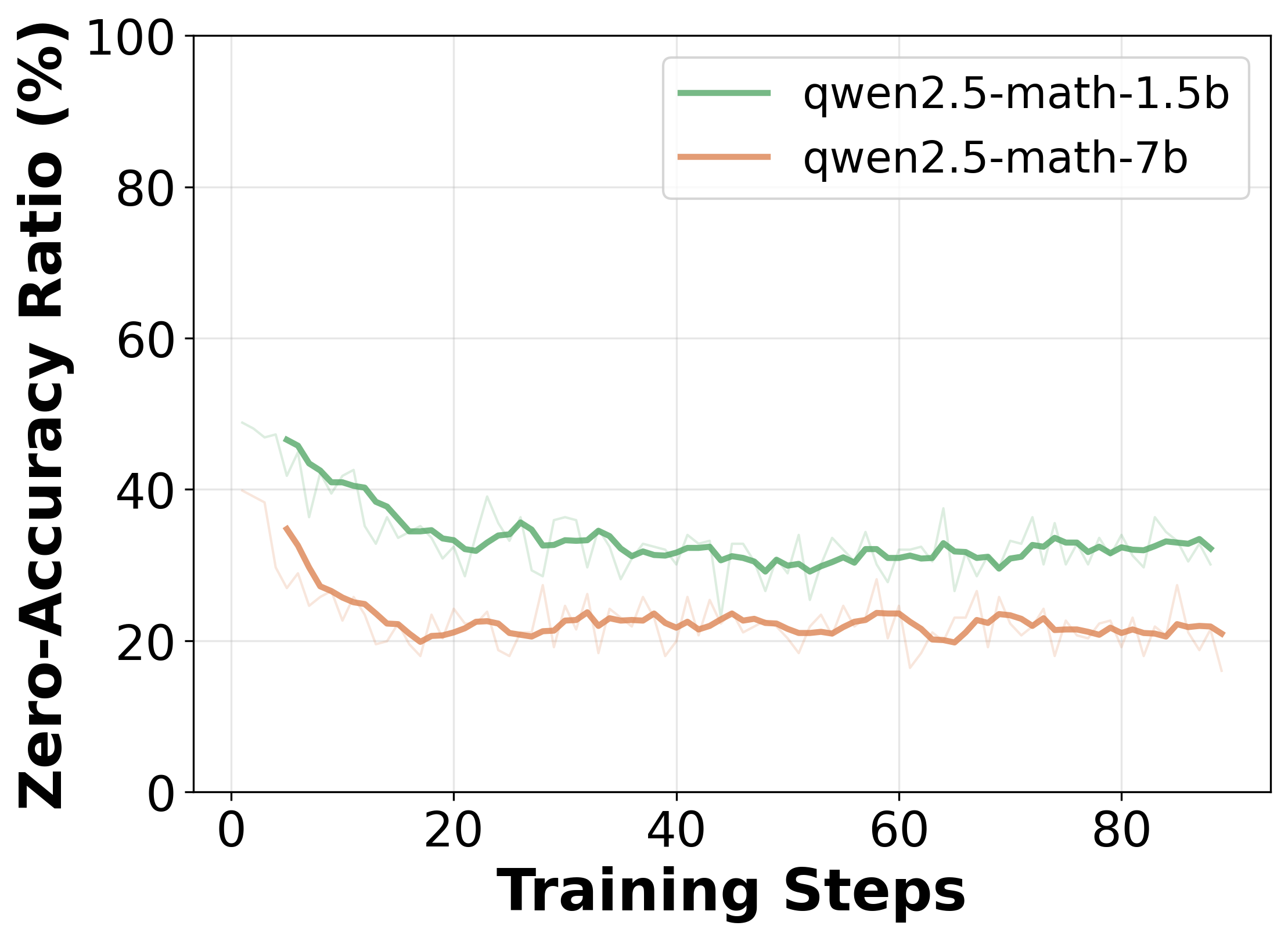}
    \caption{}
    \label{fig:zero_accuracy_problems}
  \end{subfigure}\hfill
  \begin{subfigure}[t]{0.48\linewidth}
    \centering
    \includegraphics[height=\ZeroFigH,keepaspectratio, width=\linewidth,,trim=2 6 6 5,clip]{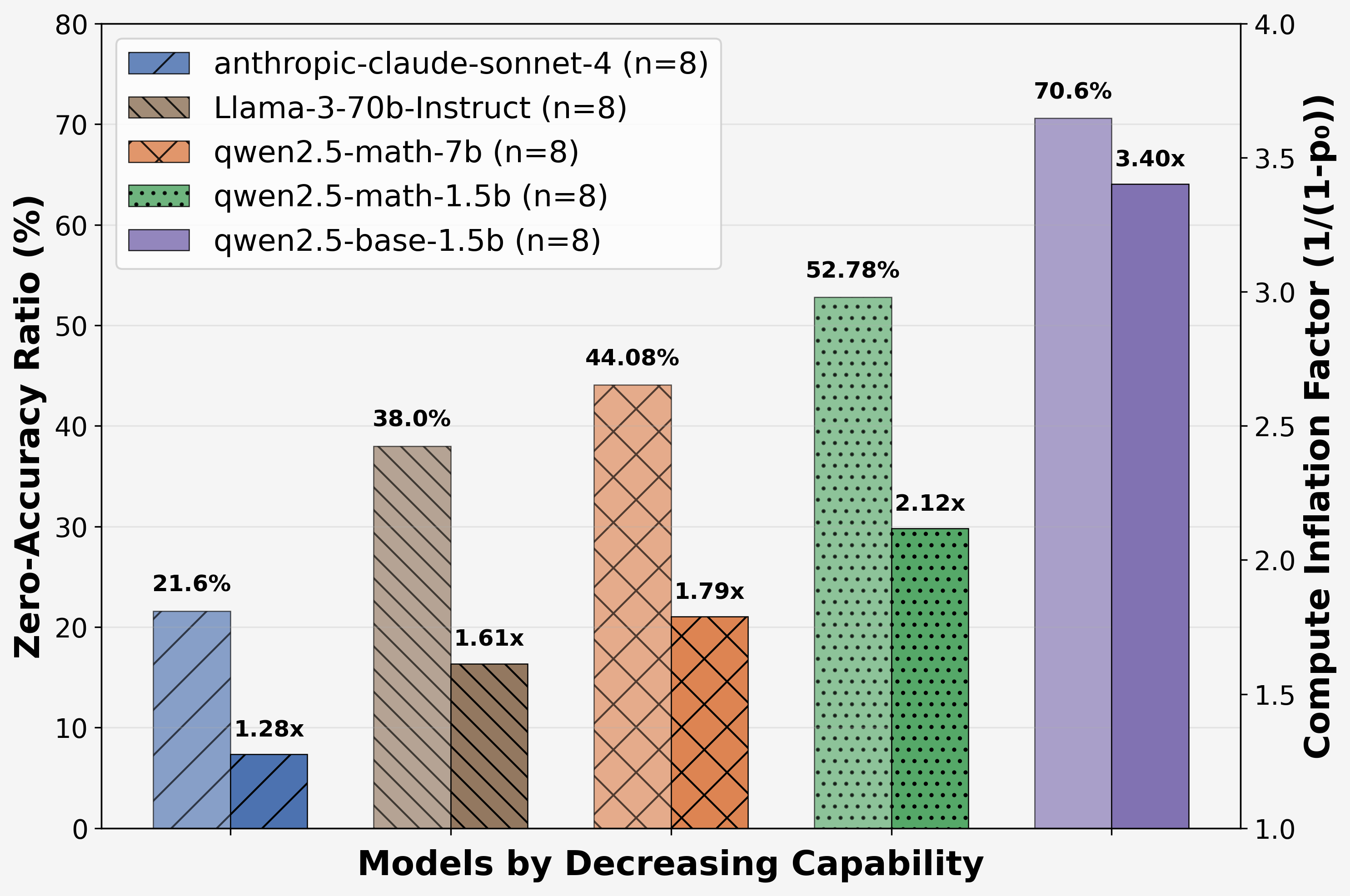}
    \caption{}
    \label{fig:zero_problem_ratio}
  \end{subfigure}

  \vspace{-0.4em}
  \caption{Zero-accuracy problems in GRPO-style training and inference:
  (\subref{fig:zero_accuracy_problems}) zero-accuracy problem ratio during GRPO training;
  (\subref{fig:zero_problem_ratio}) Proportion of zero-accuracy problems during rollout inference across model sizes, averaged over five benchmarks ($n=8$).
Light bars show the zero-accuracy ratio $p_0$ (\%, left y-axis). Dark bars show the compute inflation factor (right y-axis), defined as $1/(1-p_0)$, i.e., the expected oversampling multiplier needed to obtain the same number of non-zero-accuracy queries.
Text annotations (e.g., $1.28\times$) report the inflation factor.}
  \label{fig:zero_accuracy_two_panel}
  \vspace{-0.6em}
\end{figure}
A central challenge in RLVR is the high computational cost. Methods include GRPO, Dr.GRPO \citep{liu2025understanding}, RLOO \citep{ahmadian2024back} generate a large number of rollouts for a training query, which requires the additional compute. 
One major source of inefficiency arises when many queries contain no correct rollouts, producing \emph{zero advantages} and vanishing gradients, which waste computation and slow convergence. 
Figure~\ref{fig:zero_accuracy_problems} shows that in the RL post-training runs for 1.5B and 7B, more than 35\% of queries fall into this zero-advantage regime --- consuming substantial compute without contributing to learning. Figure~\ref{fig:zero_problem_ratio} shows that this also happens on large models such as Claude-Sonnet-4 and Llama-3-70b.    
When a fraction $p_0$ of queries yield zero accuracy, maintaining an effective training batch size requires sampling $1/(1-p_0)\times$ more queries on average, directly inflating rollout overhead. This highlights the inefficiency of static rollout in GRPO.


Recent studies \citep{yu2025dapo, lin2025cppo, zhang2025speed} have explored multiple directions to improve training efficiency, including more effective utilization of rollout budgets and selective use of samples during training. To mitigate the zero-advantage issue, DAPO~\citep{yu2025dapo} filters out zero-advantage problems entirely, to prevent using the zero-advantage queries. While CPPO filters
out rollouts with low absolute advantage~\citep{lin2025cppo}. However, filtering out or discarding zero- or one-accuracy queries still wastes the rollout budget already consumed for generation.   Lacking a priori measure of training problem
informativeness, these methods require the pre-sampling of an extensive batch of rollouts before each update iteration, potentially negating efficiency gains from the significant sampling overhead. Thus, this raises our first research question: \emph{How can we design an adaptive rollout strategy that improves compute efficiency without increasing the sampling overhead?}

Beyond inefficiencies from zero-advantage queries, once a query contains both correct and incorrect rollouts (e.g., 1 correct and 7 incorrect out of $N=8$), it may be unnecessary to retain all negatives for model updates. 
Several studies have explored heuristic forms of such selective downsampling to enhance performance \citep{shang2025rstar2agent, shrivastava2025sample}.
This naturally raises the question: \emph{Can subsampling be guided by a theoretically grounded, optimization-based principle rather than heuristics?}

Building upon the above unsolved problems, we introduce our method, Adaptive Efficient Rollout Optimization (AERO),  with three contributions.
(1) We replace fixed-$N$ with 
adaptive rollout allocation that increases budget on hard (all-fail) queries and cuts it on easy ones, avoiding the fixed-$N$ induced gradient dead zone by reducing the zero accuracy queries. We make sure that under the same probability, all the queries are able to obtain at least one correct rollout compared to the fixed-$N$ strategy, while our adaptive strategy results in fewer zero accuracy queries.  
(2) We maintain a Beta--Binomial posterior over each query’s success rate; the posterior avoid zero-advantage cases and improves sample efficiency instead of discarding them. Note, this approach is also applied to perfect queries to trigger non-zero advantages. 
(3)  For queries yield a mix of correct/incorrect rollouts with a low success rate, 
we keep all positives and \emph{down-sample} negatives to a target $1\!:\!k (k \geq 1)$ ratio via rejection sampling. 
We not only reduce compute by rejection but also theoretically show that an appropriate down-sampling ratio ($k=1$) enlarges the expected policy gradient norm thus to further prevent gradient dead zone. 


\textbf{Overall results.} 
AERO is an effective approach for improving GRPO’s computational efficiency through adaptive rollout budget reallocation and theoretically supported subsampling. 
By making rollout allocation adaptive, AERO significantly improves compute efficiency without sacrificing performance.  
Under the same total rollout budget, AERO reduces total training compute by \textbf{48\%} (from 3,714 to 1,917 PFLOPs) for 1.5B and by \textbf{47\%} (from 17,334 to 9,181 PFLOPs) for 7B,  and shortens per-step wall-clock time by \textbf{43\%} for 1.5B and \textbf{49\%} for 7B, while slightly improving final benchmark accuracy.  

\section{Related Works}
RLVR has become a cornerstone for enhancing the reasoning capabilities of LLMs, with GRPO emerging as a prominent and resource-efficient alternative to traditional Proximal Policy Optimization (PPO) \citep{shao2024deepseekmath}. 
Then a line of highly influential works, including DeepSeek-Math~\citep{shao2024deepseekmath}, Deepseek-prover~\citep{xin2024deepseekprover}, and DeepSeek-R1~\citep{guo2025deepseek}, sparked a wave of follow-up refinements to GRPO-style post-training.
More recent variants, such as Dr.\ GRPO~\citep{liu2025understanding}, DAPO~\citep{yu2025dapo}, and
GRPO-RoC~\citep{shang2025rstar2agent} 
, further refine GRPO by correcting optimization bias and improving token efficiency, alleviating gradient dead zones and entropy collapse, introducing agentic setting with resampling-on-correct trajectories.


During GRPO training, prompts may fail to yield enough non-degenerate groups with non-zero success rates. To enhance sample efficiency and training stability, recent adaptive approaches first estimate prompt informativeness (e.g., via pre-rollouts or lightweight difficulty signals), and then adjust the rollout/training pipeline via selective prompt/rollout selection, reuse of responses or trajectories across iterations (e.g., via replay buffers), or post-hoc filtering and pruning of low-value samples \citep{zhang2025speed, sun2025improving, liao2025enhancing, kong2025rethinking, li2025knapsack, bamba2025xrpo}.



\section{Preliminaries}




\subsection{Background: GRPO Algorithm}


The core mechanism of GRPO involves estimating the advantage of a sampled response relative to the performance of its peers within a group. For a given prompt, the policy generates a group of $N$ responses (rollouts), and each response $o_i$ receives a scalar reward $r_i$. The set of rewards for the group is denoted as $\mathbf{r} = \{r_1, r_2, \ldots, r_N\}$. The advantage $\hat{A}_i$ for the $i$-th response is then calculated by normalizing its reward against the group's mean and standard deviation:
\begin{equation}
    \hat{A}_i = \frac{r_i - \text{mean}(\mathbf{r})}{\text{std}(\mathbf{r})}
\end{equation}
GRPO optimizes a group-level clipped objective function to update the policy parameters $\theta$. A key simplification is that all tokens within a single response $o_i$ share the same advantage value $\hat{A}_i$, as the reward is given at the response level. The objective function $J(\theta)$ is defined as:


\begin{equation}
\label{obj_grpo}
\begin{aligned}
    &J(\theta) = \mathbb{E}_{q, \{o_i\}} \Bigg[ \frac{1}{N} \sum_{i=1}^{N} \frac{1}{|o_i|} \sum_{t=1}^{|o_i|} \min \Big( \rho_{i,t} \hat{A}_i, \\
    &\quad \text{clip}(\rho_{i,t}, 1-\epsilon, 1+\epsilon)\hat{A}_i \Big) - \beta D_{KL}(\pi_{\theta} || \pi_{\text{ref}}) \Bigg]
\end{aligned}
\end{equation}
where $\rho_{i,t}$ is the importance sampling ratio for the $t$-th token of the $i$-th response, defined as:
$\rho_{i,t} = \frac{\pi_\theta(o_{i,t} | q, o_{i,<t})}{\pi_{old}(o_{i,t} | q, o_{i,<t})}$.
Here, $\pi_\theta$ is the current policy being optimized, and $\pi_{old}$ is the frozen policy from which the rollouts were sampled. The $\text{clip}$ function constrains the policy ratio $\rho_{i,t}$ within the range $[1-\epsilon, 1+\epsilon]$, which stabilizes training by preventing excessively large policy updates. Finally, the KL-divergence term $D_{KL}(\pi_\theta || \pi_{ref})$ acts as a regularizer, penalizing the updated policy $\pi_\theta$ for deviating too far from a reference policy $\pi_{ref}$ (often the initial SFT model).

Starting from the GRPO objective \eqref{obj_grpo}, and under the standard small-step (trust-region) regime where the policy ratios remain within the clipping range so that no clipping is activated (i.e., $\rho_{i,t}\le 1+\epsilon$ for all $t$), the gradient simplifies to
\begin{equation}
\label{eq:simplified-grad}
\begin{aligned}
&\nabla_\theta J(\theta)
= \mathbb{E}_{q,\{o_i\}}\!\Bigg[
\frac{1}{N}\sum_{i=1}^{N}\frac{\hat A_i}{|o_i|}
\sum_{t=1}^{|o_i|}
\\[-2pt]
&
\rho_{i,t}\nabla_\theta \log\pi_\theta(o_{i,t}\!\mid\! q,o_{i,<t})
\Bigg]
- \beta\,\nabla_\theta D_{\mathrm{KL}}(\pi_\theta\Vert \pi_{\mathrm{ref}}).
\end{aligned}
\end{equation}
The proof can be found in \ref{sup_proof_policy_gradient}.

While highly effective, GRPO's reliance on a fixed number of rollouts for every problem introduces inefficiencies, some problems result in all-fail or all-correct cases, where zero-advantages ($\hat A_i = 0$) leads to a shrinking batch-gradient magnitude. A substantial portion of the computation and training time is wasted on zero-advantage queries, making the process inefficient.

\subsection{The Impact of Zero-Advantage Problems on GRPO}

To dive deep in the inefficiency caused by zero-advantage problems, 
We hypothesize that performance is disproportionately affected by problems for which the model fails to produce any correct roll-outs termed \textbf{Zero-Accuracy Problems}.

When all rollouts for a given problem fail (i.e., the rewards $r_1, \dots, r_N = 0$), the empirical success rate $u = \text{mean}(\mathbf{r}) = 0$, leading to a complete collapse of the advantage, 
\begin{equation}
    \hat{A}_i = \frac{r_i - u}{\text{std}(\mathbf{r})} = \frac{0 - 0}{\text{std}(\mathbf{r})} = 0.
\end{equation}

To validate this hypothesis, we compare two GRPO variants---a stronger and a naive baseline across multiple training checkpoints. At each step $t$, we track two metrics:
(1) \textbf{zero-accuracy problem ratio}, $x_t^{\text{zero}}$: the fraction of problems whose rollouts are all incorrect:
    \begin{equation}
        x_t^{\text{zero}} = \frac{\sum_{p=1}^{P} \mathbb{I}(\text{All } r_i=0 \text{ for problem } p)}{P}.
    \end{equation}
    where $P$ is the number of problems in the batch and $\mathbb{I}(\cdot)$ is the indicator function.

(2) \textbf{Normalized model test score}, $\tilde{y}_t$: the relative test performance between two methods at step $t$. Let the raw test scores for the two methods, computed as
\begin{equation}
\tilde{y}_{t,i} = \frac{y_{t,i}}{\max(y_{t,1}, y_{t,2})},
\end{equation}
where $y_{t,i}$ denotes the raw test score of method $i$. This normalization removes the effect of overall score scaling during training.

\begin{wrapfigure}{h}{0.5\columnwidth}
    \vspace{-1.5em} 
\includegraphics[width=\linewidth]{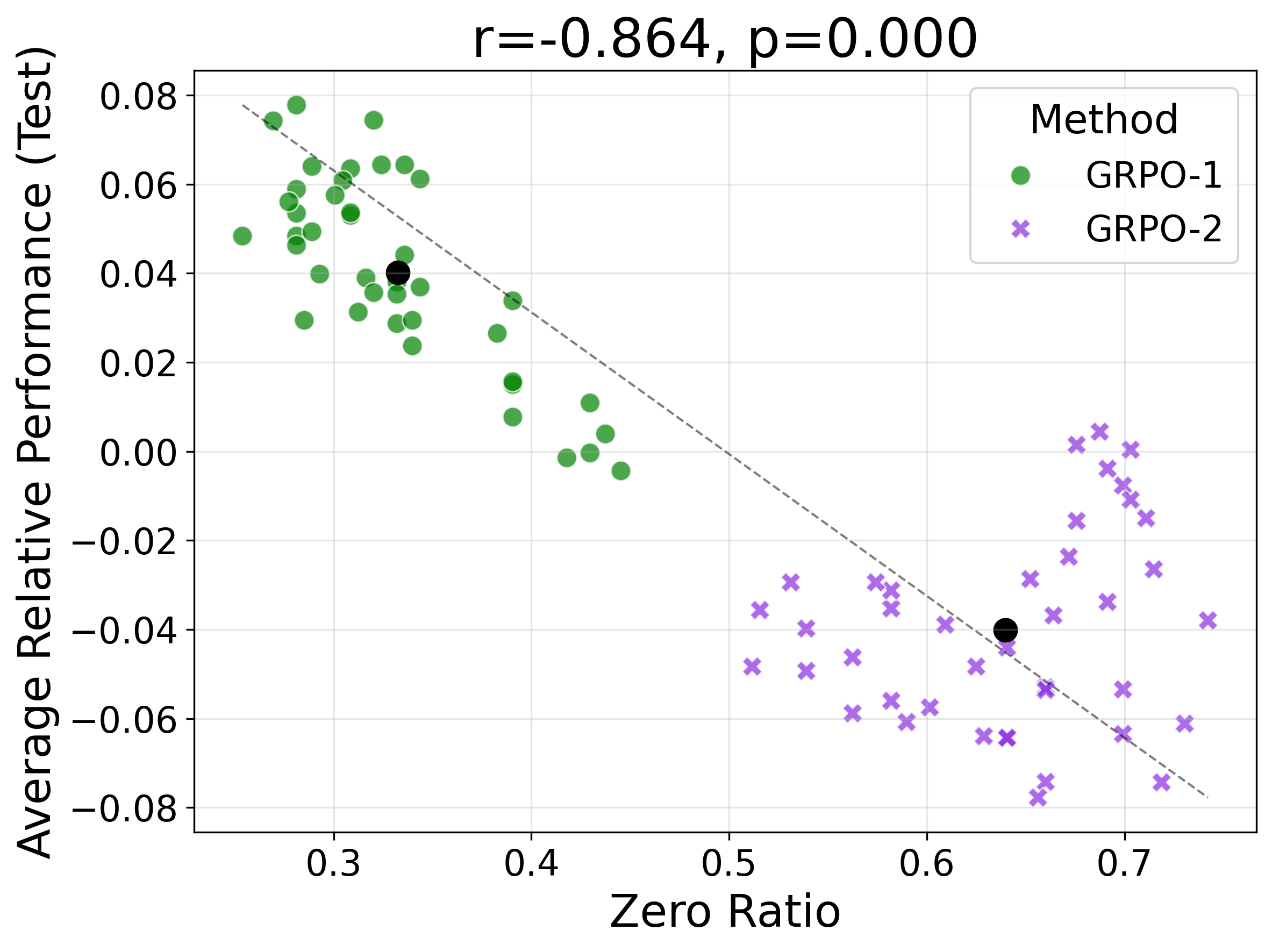}
    \caption{Correlation between zero-accuracy ratio and relative model performance ($r=-0.86, p<0.001$).}
    \label{fig:correlation_plot}
    \vspace{-.5em} 
\end{wrapfigure}

We compute the Pearson correlation between the zero-accuracy ratio $x^{\text{zero}}$ and the relative performance score $y$ across all checkpoints. As shown in Figure~\ref{fig:correlation_plot}, the two are strongly and significantly negatively correlated ($r = -0.86, p < 0.001$). Thus, a higher proportion of zero-accuracy problems  predicts lower model performance.


For all-correct queries, just like the zero-accuracy ones, face the gradient dead zone issue. Theorem~\ref{thm:grad_norm_maximization} formalizes this phenomenon, showing that the gradient norm vanishes when all rollouts for a query are identical, explaining why GRPO’s fixed-rollout scheme wastes compute on uninformative queries.




\begin{figure*}[t]
    \centering
     \setlength{\dbltextfloatsep}{10pt minus 2pt}
    \includegraphics[width=0.84\textwidth, trim=0 0 0 10, clip]{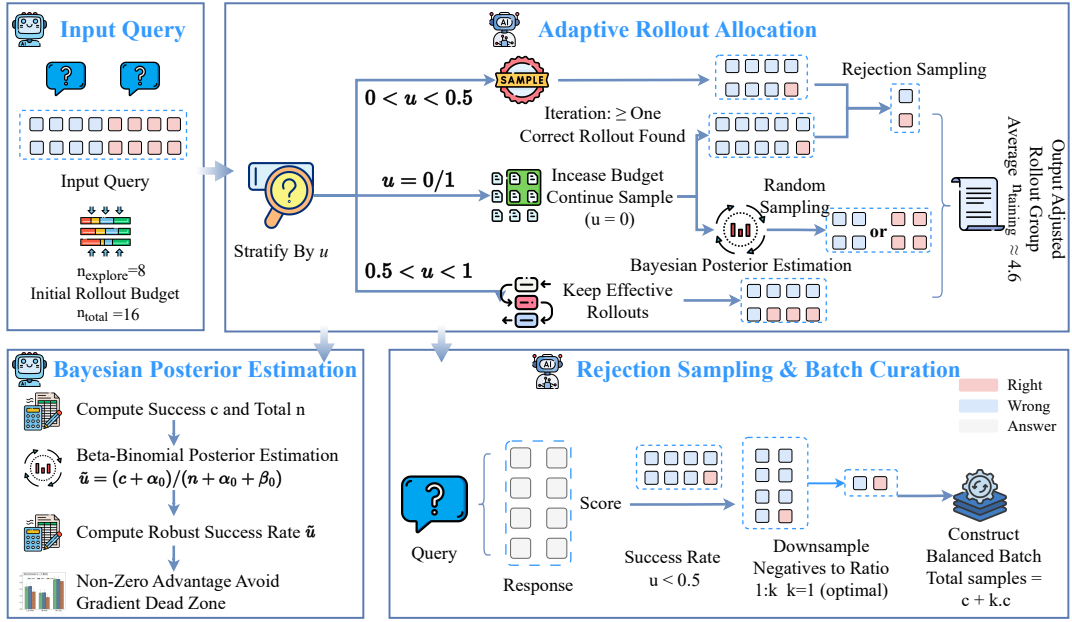}
    \caption{High-level architecture of the Posterior-Guided Sampling (AERO) framework.
AERO begins with $n_{\text{total}} = 16$ total rollouts per query, from which $n_{\text{explore}} = 8$ samples are used for stratified exploration based on success rate $u$. Finally, AERO yields an effective training size of approximately $n_{\text{training}}=4.6$ rollouts per query.
    }
    \label{fig:pgs_framework}
\end{figure*}

\section{Methodology - The AERO Framework}
AERO restructures rollout allocation into a two-stage process, as shown in Figure~\ref{fig:pgs_framework}. In Stage I, a small probe budget stratifies queries based on correctness. In Stage II, differentiated strategies are applied:
(1) The partial-correct low accuracy queries 
are rescued via iterative resampling, with early exit upon first success. (2) Zero-advantage queries are mitigated via a Bayesian posterior over success rate, ensuring stable gradients rather than discarding them entirely. (3) Partially correct queries undergo fine-grained rejection sampling: correct rollouts are retained, while incorrect ones are down-sampled to achieve a balanced positive-to-negative ratio.


\subsection{Stage I: Exploration \& Stratification.} 
Given a total budget of $n$ (e.g., 16) rollouts per problem, AERO first allocates a partial exploration budget, $n_{explore}$ (e.g., 8), to generate an initial set of responses. Based on the empirical success rate $u$ from this initial pass, problems are stratified into three distinct groups corresponding to their perceived correctness rate: 
\emph{low-success partial} ($0\leq u<0.5$), and \emph{high-accuracy} ($1 \geq u\ge 0.5$).

\begin{algorithm}[t]
\caption{\footnotesize Posterior Guided Sampling (AERO)}
\label{alg:pgs}
\SetAlgoLined
\KwIn{Batch $\{q_i\}$, budgets $n_{total}, n_{explore}$, increment $n_{extra}$, ratio $k$, rescusing threshold $S$.}
\KwOut{Curated set $\mathcal{D}_{train}$.}

\textbf{Stage 1: Exploration} \\
Generate $n_{explore}$ rollouts; update priors and success rate. \\
Let $S \in [0.5n_{explore}]$, $u_{res} = S/n_{explore}$
Let $\mathcal{Q}_{res} \!=\! \{q_i \mid u_i \leq u_{res}\}$, 
$\mathcal{Q}_{partial} \!=\! \{q_i \mid u_{res}<u_i<0.5\}$, 
$\mathcal{Q}_{high} \!=\! \{q_i \mid u_i \ge 0.5\}$.
\\ 
\textbf{Stage 2: Exploitation} \\
\Indp
$\mathcal{Q}_{partial}$: keep all correct + $k$-scaled incorrect. \\
$\mathcal{Q}_{high}$: keep all rollouts. \\
$\mathcal{Q}_{res}$: iterative rescuing with $n_{extra}$. \\
\Indm
\end{algorithm}

\subsection{Stage II: Differentiated Exploitation.} The remaining budget, $N_{budget} = (n - n_{explore}) \times P$, is then dynamically allocated. Each problem stratum enters a specialized processing pipeline designed to maximize its contribution to the policy gradient. This targeted approach ensures that computational resources are focused where they are most needed, either by attempting to rescue difficult problems or by curating an efficient batch from easier ones. The complete AERO process for a single training step is formally detailed in Algorithm~\ref{alg:pgs}. The algorithm outlines the initial exploration pass, the stratification logic, and the subsequent differentiated exploitation strategies for each problem category before curating the final training batch for the GRPO update.

\subsubsection{Strategy 1: Iterative Rescue of Rescue-Threshold Problems}

Let $S \in \mathbb{N}_0$ be a hyperparameter, with $S \in [0, 0.5n_{explore}]$. 
It defines a threshold on the number of correct rollouts, corresponding to a success-rate threshold $u_{res} = S / n_{explore}$. 
Problems with a success rate less than or equal to $u_{res}$ are classified as the rescue set $\mathcal{Q}_{res}$. 
Problems in the $\mathcal{Q}_{res}$ stratum, including zero-accuracy ones which provide no learning signal under standard GRPO, enter a budget-aware \textbf{iterative rescue phase} (Figure~\ref{fig:pgs_framework}). 
Instead of expending the entire remaining budget at once, these problems receive a small number of additional rollouts ($n_{extra}$, e.g., 2) per iteration. This loop continues until either the first successful rollout is generated 
(at least one of rollout in the $n_{extra}$ is correct)
or the total budget $N_{budget}$ is exhausted. 
\begin{wrapfigure}[10]{r}{0.5\columnwidth}
  \centering
  \includegraphics[width=\linewidth]{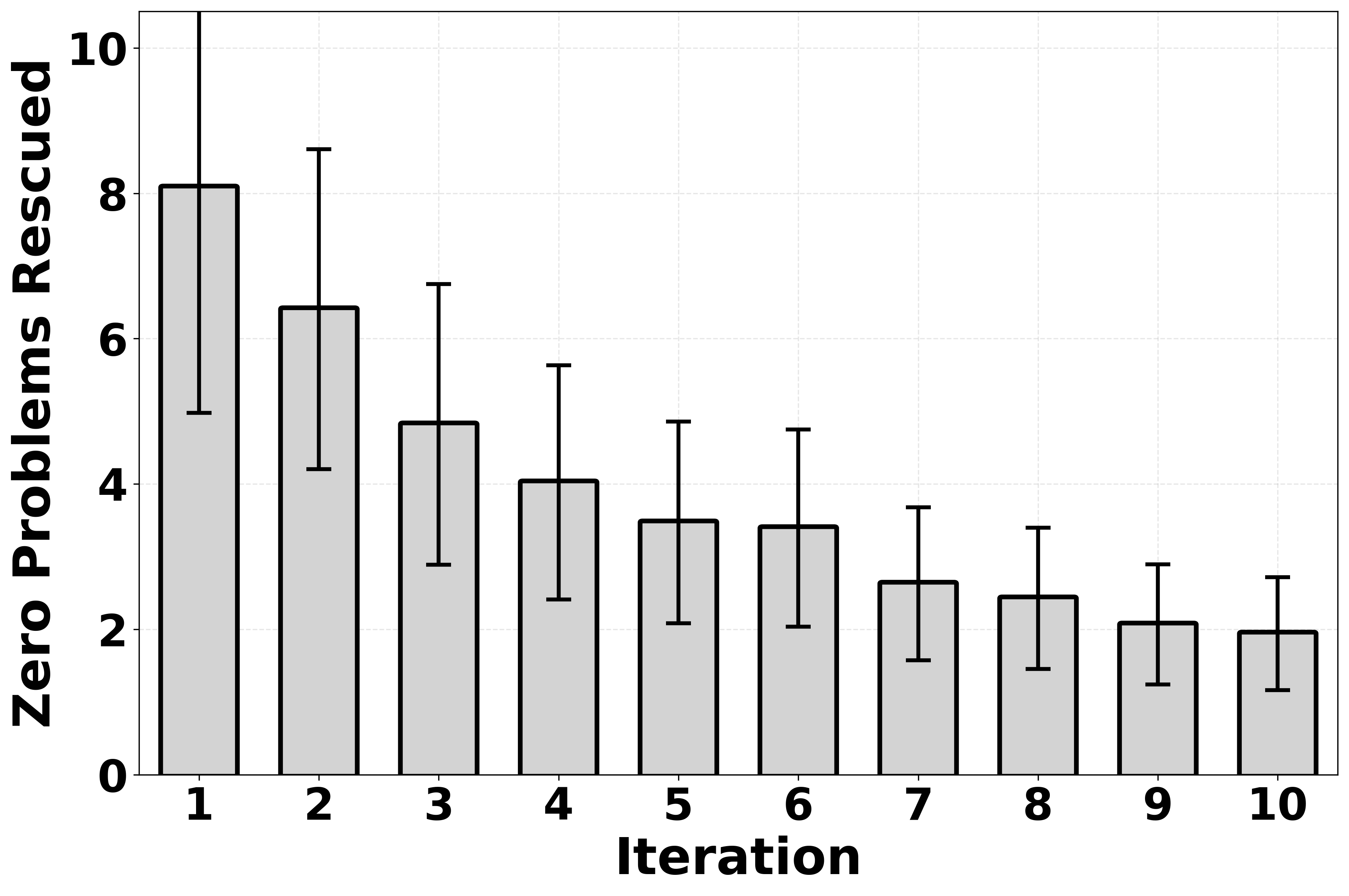}
  \caption{Rescued problems across iterations.}
  \label{fig:rescued_over_iters}
\end{wrapfigure}
In our study, we set $S=0$, so that rescue-threshold problems coincide with zero-accuracy problems, which aligns best with our model and dataset characteristics. In general, $S$ can be adjusted to match model capability and task difficulty: the more powerful the model and the easier the query, the higher $S$ may be set. 
Figure~\ref{fig:rescued_over_iters} shows how our method
gradually moves zero-accuracy problems into the non-zero pool across iterations. This strategy efficiently converts zero-accuracy problems into valid training samples, increasing the proportion of non-zero-accuracy problems in the batch by 26\% in our experiments.

\par\medskip
\subsubsection{
Strategy 2: Robust Estimation via Bayesian Posteriors}
Even after rescue, certain zero-accuracy queries remain in the zero-advantage regime. Likewise,
queries in the 
all-correct ($u=1$) stratum, suffer from the same "gradient dead zone" as zero-accuracy problems, as their advantage also becomes zero. To combat this, we replace their the empirical mean $u$ with a more robust \textbf{Bayesian posterior estimate}, $\tilde{u}$. We model the success rate with a Beta-Binomial distribution, which is a conjugate prior. Starting with a weak prior $p(u) \sim \text{Beta}(\alpha_0, \beta_0)$, where in practice we set $\alpha_0=\beta_0=1$, after observing $c$ successes in $n$ rollouts, the posterior becomes $p(u|c,n) \sim \text{Beta}(\alpha_0+c, \beta_0+n-c)$. We then use the mean of this posterior distribution:
\begin{equation}
    \tilde{u} = \frac{c + \alpha_0}{n + \alpha_0 + \beta_0}
\end{equation}
The Bayesian estimate ensures that $\tilde{u}$ is never exactly 0 or 1, guaranteeing a stable, non-zero advantage $\hat{A}_i \neq 0$ for all problems and a consistent learning signal. Figure~\ref{fig:gradient_norm_vs_pg} valides the Bayesian's work. 
After Bayesian posterior estimation provides a stable learning signal, we retain only a small curated subset (typically $4$ rollouts from $n$) via random sampling.


\subsubsection{Strategy 3: Efficient Batch Curation via Rejection Sampling}
For low-success problems in $\mathcal{Q}{\text{low}}$, where $u_{res} < u < 0.5$, using all $n_{\text{explore}}$ rollouts may result in a batch dominated by incorrect samples. We apply \textbf{rejection sampling} to construct a smaller, more information-dense batch (Figure~\ref{fig:pgs_framework}). We retain all $c$ correct rollouts and downsample the $n_{\text{explore}} - c$ incorrect ones, selecting only the top-$m$ most informative samples. The number of selected incorrect rollouts is set by $m = k \cdot c$, where $k$ is a tunable ratio. This creates a balanced $1:k$ mix of correct and incorrect rollouts, reducing computational cost.

\subsection{Theoretical Guidance for Rejection Sampling}
While rejection sampling is empirically effective, the choice of the downsampling ratio $k$ can be theoretically motivated. We derived the Theorem~\ref{thm:grad_norm_maximization} concerning the upper bound of squared norm of the GRPO policy gradient, which provides guidance for selecting an optimal $k$. 


\begin{theorem}[GRPO Gradient Norm]
\label{thm:grad_norm_maximization}
Let a training subset contain $n = c + m$ rollouts, with $c$ correct rollouts ($r_i = 1$) forming set $F_C$ and $m$ incorrect rollouts ($r_i = 0$) forming set $F_S$.  
Under the simplifying assumptions of uniform rollout length $L$ and zero KL penalty ($\beta = 0$), the group-relative policy gradient in Eq.~\eqref{eq:simplified-grad} is
\begin{equation}
\label{eq:gradient_for_theorem}
G(\theta)
= \mathbb{E}_{q,\{o_i\}}\!\left[
\frac{1}{NL}\!
\sum_{i=1}^{N}\!
\sum_{t=1}^{L}
\hat{A}_i\,\rho_{i,t}\,
\nabla_\theta \log\pi_\theta(o_{i,t}\!\mid q,o_{i,<t})
\right],
\end{equation}
where $\rho_{i,t} = \frac{\pi_\theta(o_{i,t}\mid q,o_{i,<t})}{\pi_{\text{old}}(o_{i,t}\mid q,o_{i,<t})}$.  
Then its squared norm satisfies
\[
\|G(\theta)\|^2
\le
\frac{c\,m}{(c+m)^2}
\,\mathbb{E}\!\left[\|\bar{v}_C - \bar{v}_F\|^2\right],
\]
where
$\bar{v}_C = \frac{1}{c}\sum_{j \in F_C} v_j$,
$\bar{v}_F = \frac{1}{m}\sum_{j \in F_S} v_j$,
and
$v_j = \frac{1}{L}\sum_{t=1}^{L}\nabla_\theta \log\pi_\theta(o_{j,t}\mid q,o_{j,<t})$.  
\end{theorem}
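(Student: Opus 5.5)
The plan is to compute the group-normalized advantages explicitly for a binary-reward group, rewrite the inner sum as a difference of class means, and then move the norm inside the expectation with Jensen's inequality. Here $N=n=c+m$, and we may assume $c,m\ge 1$; otherwise all advantages vanish, $G=0$, and the bound holds trivially with the convention $cm=0$.

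\textbf{Step 1: advantages.} With $r_i\in\{0,1\}$ the group mean is $\mu=c/n$. The (population) standard deviation is
\[
\sigma=\sqrt{\mu(1-\mu)}=\frac{\sqrt{cm}}{n}.
\]
Hence correct rollouts get $\hat A_i=(1-\mu)/\sigma=\sqrt{m/c}$, and incorrect rollouts get $\hat A_i=-\mu/\sigma=-\sqrt{c/m}$.

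\textbf{Step 2: the ratios.} I work at the on-policy point $\theta=\theta_{\text{old}}$ (or, equivalently, absorb $\rho_{i,t}$ into the per-token score). This is the same small-step regime used to derive Eq.~\eqref{eq:simplified-grad}, and it gives $\rho_{i,t}=1$. The inner term $\frac1L\sum_t(\cdot)$ then equals $v_i$, and the integrand becomes
\[
\frac1n\Big(\sqrt{\tfrac{m}{c}}\sum_{j\in F_C}v_j-\sqrt{\tfrac{c}{m}}\sum_{j\in F_S}v_j\Big)
=\frac1n\big(\sqrt{mc}\,\bar v_C-\sqrt{cm}\,\bar v_F\big)
=\frac{\sqrt{cm}}{c+m}\,(\bar v_C-\bar v_F).
\]

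\textbf{Step 3: the norm.} Write $G=\mathbb{E}[X]$ with $X=\frac{\sqrt{cm}}{c+m}(\bar v_C-\bar v_F)$. Jensen's inequality, applied to the convex function $\|\cdot\|^2$, gives $\|\mathbb{E}X\|^2\le\mathbb{E}\|X\|^2$. This is exactly the claimed bound when $(c,m)$ is treated as fixed for the training subset, as in the statement. If $(c,m)$ is instead random, the factor simply stays inside the expectation, and the stated form is read conditionally on the composition of the subset.

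\textbf{Main obstacle.} The only delicate point is the ratios $\rho_{i,t}$ away from $\theta_{\text{old}}$. If I want to avoid setting $\theta=\theta_{\text{old}}$, I would redefine $v_j$ to include the ratio weights, since the statement's $v_j$ omits them. The other option is to note that inside the unclipped trust region $\rho_{i,t}\in[1-\epsilon,1+\epsilon]$, so $\rho_{i,t}=1+O(\epsilon)$, and state the bound to first order. I would present the on-policy version as the proof and remark on the off-policy version.

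Finally, the factor $cm/(c+m)^2$ with $m=kc$ equals $k/(1+k)^2$. This quantity is maximized at $k=1$, which yields the guidance on the downsampling ratio.
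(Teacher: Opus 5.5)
Your proof is correct and follows the paper's argument. Both compute the Bernoulli standardized advantages, collapse the group sum to $\frac{cm}{\sigma(c+m)^2}(\bar v_C-\bar v_F)=\frac{\sqrt{cm}}{c+m}(\bar v_C-\bar v_F)$, and finish with $\|\mathbb{E}X\|^2\le\mathbb{E}\|X\|^2$, which the paper attributes to Cauchy--Schwarz; the paper silently drops the ratios $\rho_{i,t}$ and treats $(c,m)$ as fixed, so your explicit on-policy assumption $\rho_{i,t}=1$ and your treatment of the degenerate cases $c=0$ or $m=0$ just make precise what the paper leaves implicit.
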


The proof is in the in the Supplementary~\ref{sup_proof_thm_grad_norm_maximization}. 
The scalar factor $\tfrac{c\,m}{(c+m)^2}$ reflects the composition of correct and incorrect rollouts, while $\|\bar{v}_C - \bar{v}_F\|^2$ measures the discrepancy between their mean policy-gradient directions. This theorem reveals that the quantity of the scalar term is related to the allocation of $c$ and $m$ ($k = 1$). Below Lemma~\ref{lma:grad_norm_maximization} states this fact. 
\begin{lemma}[Maximization of the GROP Gradient Norm]
\label{lma:grad_norm_maximization}
For the same set up in Theorem \ref{thm:grad_norm_maximization},
\[
\|G(\theta)\|^2 \le \frac{c \cdot m}{(c+m)^2} \cdot \mathbb{E}\|\bar{v}_C - \bar{v}_F\|^2,
\] the balance term $\frac{c \cdot m}{(c+m)^2}$ is maximized if and only if $m=c$.
\end{lemma}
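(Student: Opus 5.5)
The inequality itself is exactly the conclusion of Theorem~\ref{thm:grad_norm_maximization}, so I will simply invoke it. All of the new content is the claim that the balance term $f(c,m)=\frac{cm}{(c+m)^2}$ is maximized if and only if $m=c$. I read this for fixed total size $n=c+m$, or equivalently as a statement about the ratio $k=m/c$. Both readings reduce to the same one-variable problem, because $f$ is homogeneous of degree zero.

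The plan is to rewrite the term in terms of the ratio. With $c\ge 1$ and $m=kc$, $k\ge 0$, we get $f=\frac{k}{(1+k)^2}$. Alternatively, with the fraction $p=c/(c+m)\in(0,1]$, the term is $f=p(1-p)$.

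Maximizing $p(1-p)$ is elementary. I would use the identity $p(1-p)=\tfrac14-(p-\tfrac12)^2$, or equivalently AM--GM in the form $(c+m)^2-4cm=(c-m)^2\ge 0$. This gives
\[
\frac{cm}{(c+m)^2}\le \frac14,
\]
with equality if and only if $(c-m)^2=0$, that is, $m=c$ (i.e.\ $k=1$). This settles both directions of the ``if and only if'' at once: if $m=c$ the value is $\tfrac14$, and if $m\neq c$ the gap $(c-m)^2/(c+m)^2$ is strictly positive. For fixed $n$, the constraint that $c,m$ be integers only matters when $n$ is odd, where the integer maximizers are $m=c\pm1$. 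I would state the result for real-valued $c,m$, or equivalently in terms of the ratio, and add a remark about odd $n$.

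The main obstacle is interpretive rather than technical. The claim concerns maximizing the scalar prefactor, not $\|G(\theta)\|^2$ itself, since $\mathbb{E}\|\bar v_C-\bar v_F\|^2$ also depends on which rollouts are kept. I would state explicitly that the lemma refers to the prefactor with the discrepancy term held fixed, and that it therefore maximizes the upper bound. This is what justifies choosing $k=1$ in Strategy~3.
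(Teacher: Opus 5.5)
Your argument is correct, but it takes a different route from the paper. The paper holds $c$ fixed and differentiates $f(m)=\frac{cm}{(c+m)^2}$ in $m$; setting the derivative to zero gives $m=c$. Fixing $c$ matches Strategy~3, where all $c$ correct rollouts are kept and only $m=kc$ is chosen. As written, the paper's step only locates a stationary point. The ``if and only if'' actually needs the sign of $f'(m)=\frac{c(c-m)}{(c+m)^3}$, which is positive for $m<c$ and negative for $m>c$, and the paper leaves this implicit.

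Your identity $(c+m)^2-4cm=(c-m)^2$ gives a global bound instead: $\frac{cm}{(c+m)^2}\le\frac14$, with equality exactly when $m=c$. This settles both directions at once and identifies the maximal value. It also needs no calculus and holds whichever variable is held fixed.

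It therefore covers the paper's fixed-$c$ reading directly; your ratio form $k/(1+k)^2$ is exactly that reading. In that reading $m=c$ is always an admissible integer. It is also always available, since $u<0.5$ in Strategy~3 means there are $n_{\text{explore}}-c>c$ incorrect rollouts to choose from. So the odd-$n$ caveat you raise only arises under the fixed-total reading, which is not the one the algorithm uses.

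Your closing remark, that the lemma maximizes the prefactor of an upper bound rather than $\|G(\theta)\|^2$ itself, states precisely what the paper expresses by assuming the directional term is approximately constant.
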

 This occurs when $m=c$, which corresponds to a downsampling ratio of $k=1$. The proof is in the Supplementary~\ref{sup_proof_lemma_grad_norm_maximization}. 

\subsection{Computation Cost Saving Across All Strategies}
Across all three strategies, AERO dynamically reduces effective training size, 
this results in an average of $n_{\text{training}}\approx4.6$ rollouts per query, compared to $16$ in GRPO, explaining the observed $\sim$50\% wall-clock speedup.

In strategy 1, for rescue-threshold queries, the iterative rescue loop terminates immediately once the first successful rollout is found, yielding a group size to $2$ (one correct, one incorrect), rather than $n=16$. In strategy 2, for zero-advantage queries (all-fail or all-correct), we retain only $4$ rollouts after Bayesian stabilization. In strategy 3, for low-success partial queries,  we use rejection sampling to construct a balanced $1\!:\!k$ mix of correct and incorrect rollouts, instead of using all $16$ rollouts.

Empirically, we find that early termination on rescued queries and downsampling on zero-advantage accounts for roughly 29\% of the total speedup, 
while the remaining $21\%$ arises from rejection sampling on low-success partial queries.  We conduct an ablation study to quantify each strategy's contribution to the total speedup (see Section~\ref{sec:strategy_ablation} and Figure~\ref{fig:strategy_contribution}).

\begin{figure}[t]
    \centering
    \includegraphics[width=\linewidth]{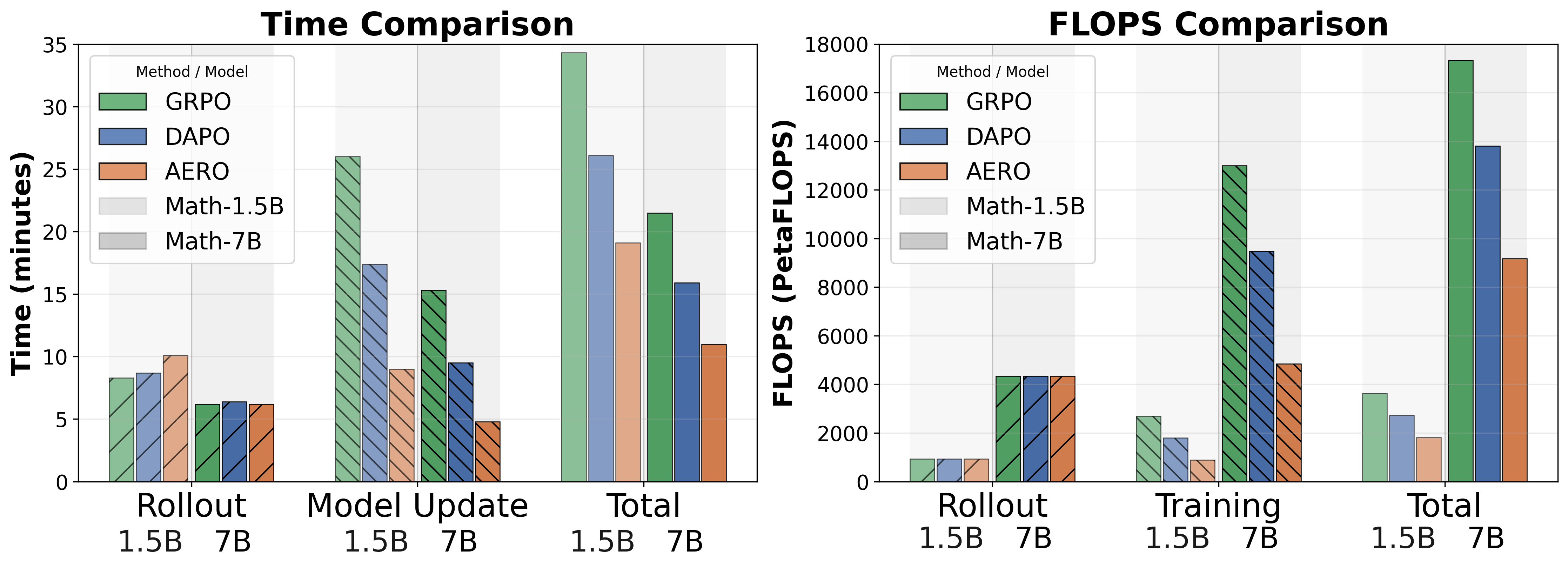}
    \caption{Comparison of per-step (a) wall-clock latency and (b) computational cost (FLOPS), 
decomposed into Data Generation (Rollout) and Model Update (Training) and Total (Rollout+Training).}
    \label{fig:efficiency_charts}
    \vspace{-1.0em}
\end{figure}

\section{Experiments}

We conduct a series of experiments to evaluate the effectiveness of our proposed AERO framework. We aim to answer three primary questions: (1) Does AERO offer tangible improvements in computational and time efficiency?  (2) Does AERO outperform standard baselines in reasoning tasks? (3) What is the contribution of each core component of the AERO framework?

\subsection{Experimental Setup}

\textbf{Core Model \& Hardware.}
All experiments are conducted using the \textbf{Qwen2.5-Math-1.5B}, \textbf{Qwen2.5-1.5B-Base}, \textbf{Qwen2.5-Math-7B}, and \textbf{Qwen2.5-7B-Instruct-1M} models. 
Training for the 1.5B models is performed on \textbf{2$\times$ NVIDIA A10 GPUs}, while the 7B models are trained on \textbf{4$\times$ NVIDIA A100 GPUs}.

\textbf{Datasets.}
For the \textbf{Math} domain, training uses the OpenR1-AERO,  a 40k curated set of \texttt{OpenR1-Math-220k} dataset including mathematical problems released on Hugging Face, 2025 \citep{zhao20251}. 
Validation and testing are conducted on held-out sets covering five reasoning sub-domains: \textbf{AIME}, \textbf{AMC}, \textbf{MATH}, \textbf{Minerva}, and \textbf{Olympiad} for broader evaluation. For the \textbf{Code} domain, training uses the dataset of 1.4k coding problems introduced by \citet{cui2025process}. 
Validation and testing use 0.5k held-out sets from \textbf{APPS} \citep{hendrycks2021measuring}, \textbf{CodeContests} \citep{li2022competition}, and \textbf{TACO} \citep{li2023taco}.

\textbf{Training Protocol.}
For each method, we select the single best-performing checkpoint based on the highest average score achieved on the validation set. This selected checkpoint is then used for a final, one-time evaluation on the held-out test set to report our main results.

\textbf{Baselines and Method Configuration.}
We compare AERO against two strong baselines, ensuring a fair comparison by allocating the same total generation budget to each method.
(1) \textbf{GRPO (Main Baseline):} The standard GRPO algorithm is run with a fixed budget of $n=16$ rollouts for every problem.
(2) \textbf{DAPO:} We compare against the DAPO algorithm, which incorporates techniques like clipping and adaptive sampling. It also uses a total generation budget of $n=16$.
(3) \textbf{AERO (Ours):} Our method is configured with a total budget of $n_{total} = 16$. This is split into an exploration budget of $n_{explore} = 8$ for Stage 1, leaving a remaining budget of $8$ rollouts for the differentiated exploitation stage. We use a downsampling ratio of $k=1$ for rejection sampling and a weak Bayesian prior of $\text{Beta}(\alpha_0=1, \beta_0=1)$ for zero-advantage cases.

\subsection{Main Results}

\paragraph{Compute and Time Performance.}

{\captionsetup[sub]{font=tiny}
\begin{figure}[t]
\centering
\vspace{-0.2em}

\begin{subfigure}{0.15\textwidth}
    \centering
    \includegraphics[width=\linewidth]{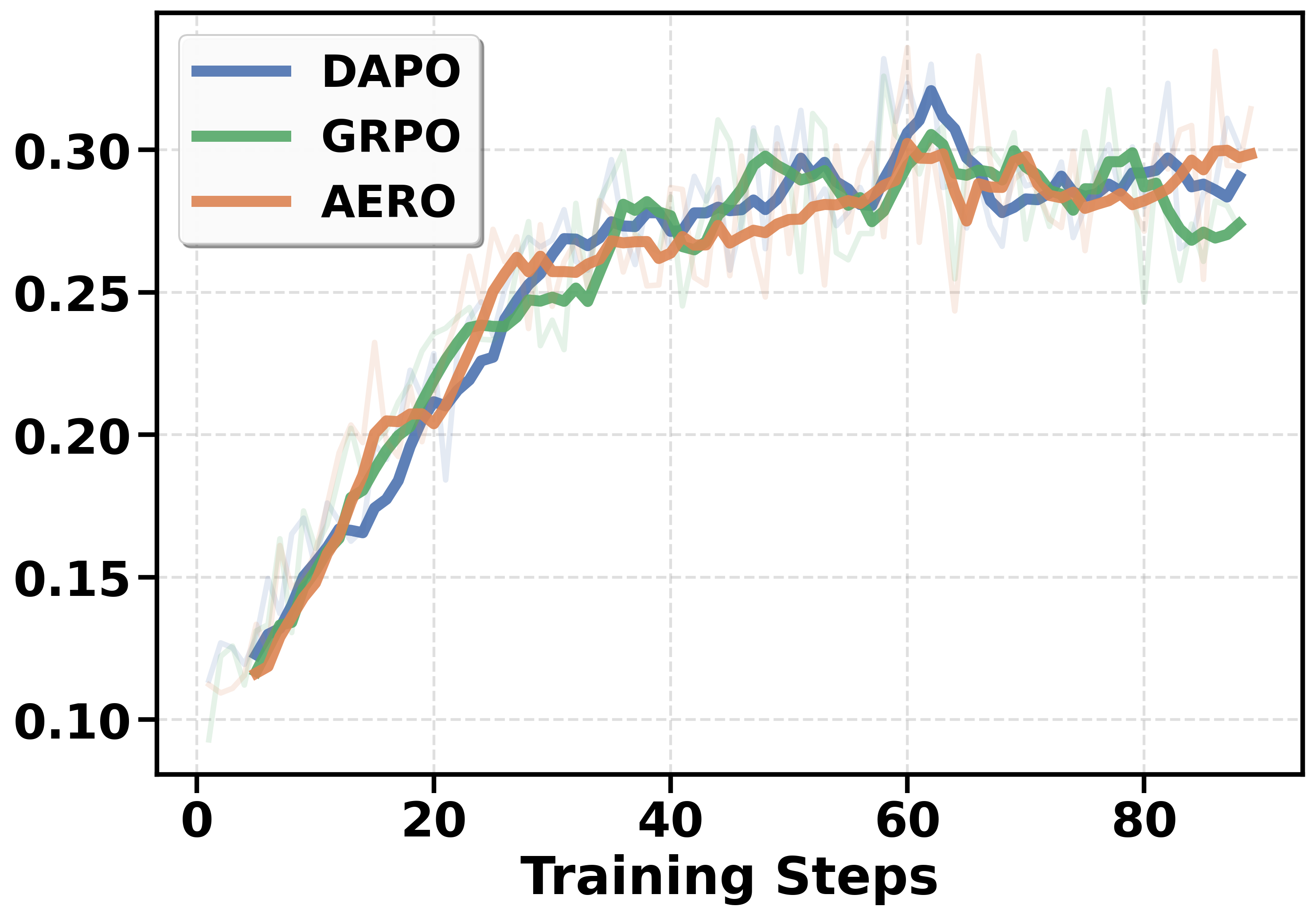}
    \caption{Training Reward\\\hphantom{Problems Ratio}}
    \label{fig:training_rewards}
\end{subfigure}\hfill
\begin{subfigure}{0.15\textwidth}
    \centering
    \includegraphics[width=\linewidth]{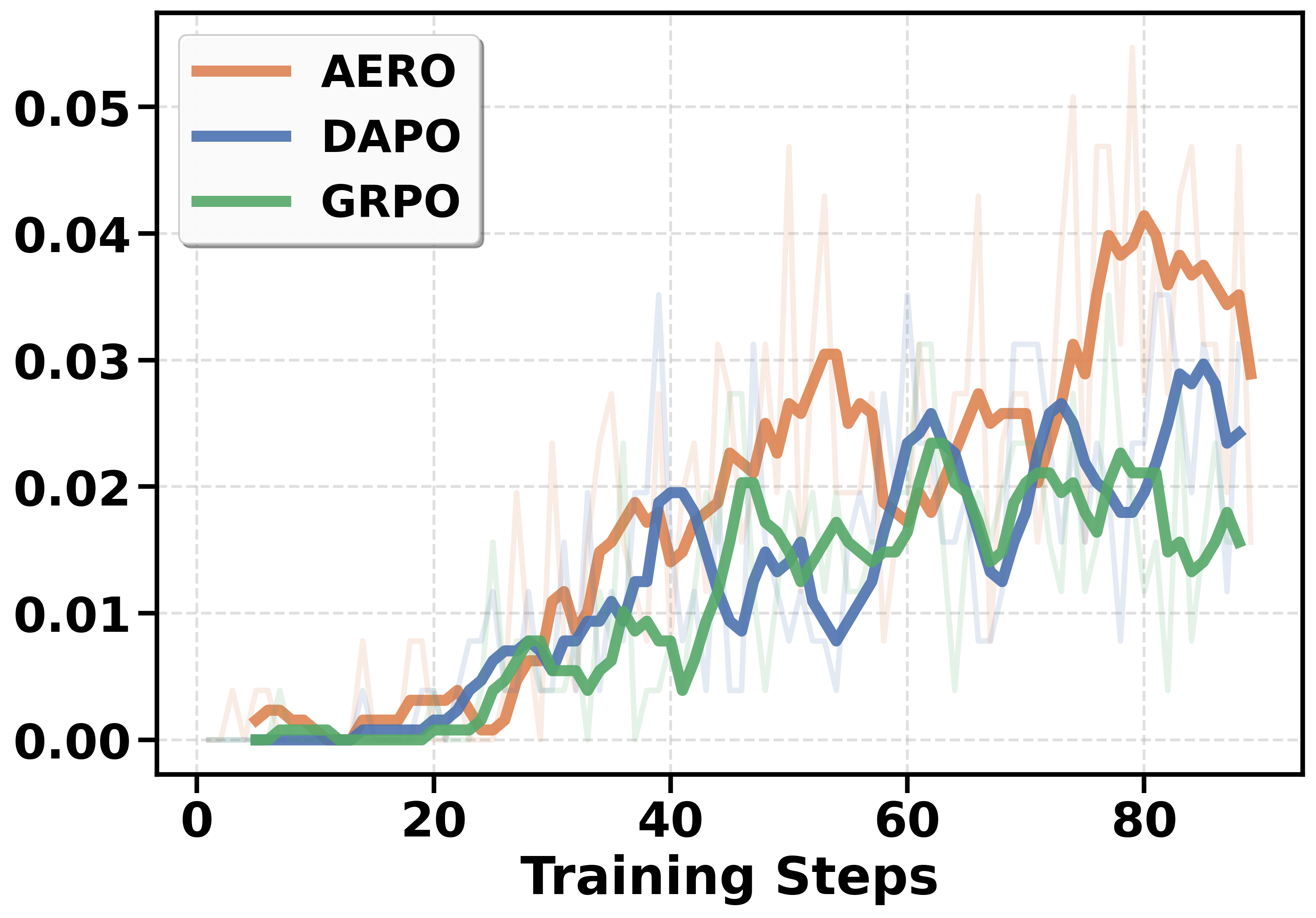}
    \caption{All-Correct Problems Ratio}
    \label{fig:all-correct}
\end{subfigure}\hfill
\begin{subfigure}{0.15\textwidth}
    \centering
    \includegraphics[width=\linewidth]{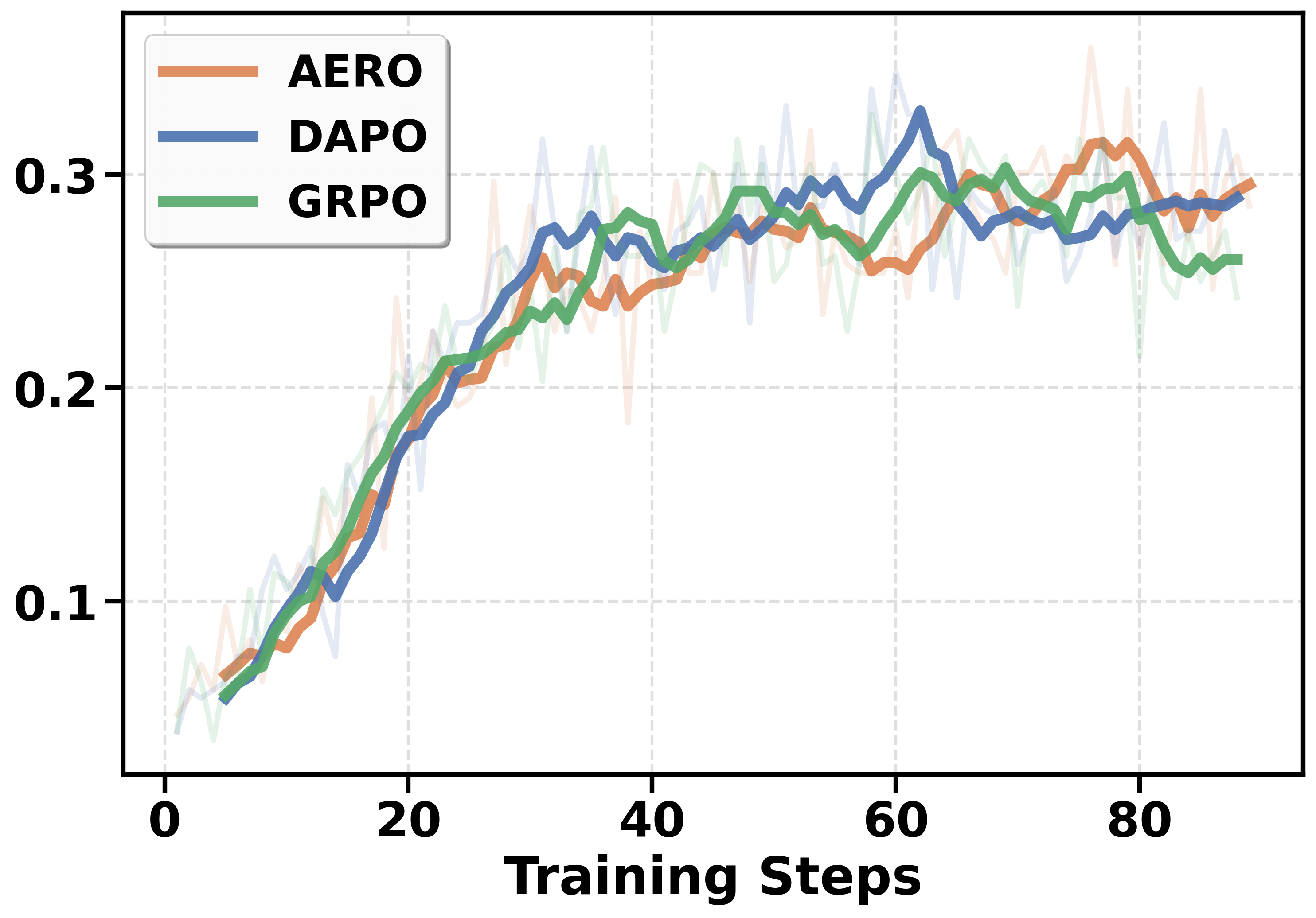}
    \caption{High-Success Partial Problems Ratio}
    \label{fig:high-partial}
\end{subfigure}

\vspace{-0.2em}
\caption{Training performance on the 1.5B model.}
\label{fig:training_performance_1p5b}
\end{figure}
}

\begin{table*}[t] 
\centering
\caption{Main results. Per step efficiency (Training Time and FLOPs)  and Held-out Test Set Performance (Avg@8, Pass@8 across five reasoning benchmarks).
Best results are in \textbf{bold}.}
\vspace{5pt}
\label{tab:main_results_combined}
\resizebox{\textwidth}{!}{%
\begin{tabular}{@{}lcc|cccccc|cccccc}
\toprule
& \textbf{Time (min)} & \textbf{FLOPs} & \multicolumn{6}{c}{\textbf{Avg@8}} & \multicolumn{6}{c}{\textbf{Pass@8}} \\
\cmidrule(lr){2-3} \cmidrule(lr){4-9} \cmidrule(lr){10-15}
\textbf{Method} &  &  & \textbf{aime} & \textbf{amc} & \textbf{math} & \textbf{minerva} & \textbf{olympiad} & \textbf{Total (P-value)}  &
\textbf{aime} & \textbf{amc} & \textbf{math} & \textbf{minerva} & \textbf{olympiad} & \textbf{Total} \\
\midrule
\multicolumn{15}{l}{\textbf{Qwen2.5-1.5B-Math}} \\
\midrule
AERO  & \cellcolor{table-blue!66} \textbf{19.1} & \cellcolor{table-blue!66} \textbf{1811} & 
0.107 & 0.323 & 0.578 & 0.171 & 0.256 & \cellcolor{table-blue!66} \textbf{0.3452} & 
0.286 & 0.660 & 0.863 & 0.409 & 0.517 & \cellcolor{table-blue!66} \textbf{0.6121}  \\
GRPO & 34.3 & 3627 & 
0.083 & 0.326 & 0.571 & 0.173 & 0.251 & 0.3401 ($=0.28$)& 
0.238 & 0.604 & 0.856 & 0.421 & 0.519 & 0.6090  \\
DAPO & 26.1 & 2722 & 
0.101 & 0.330 & 0.561 & 0.171 & 0.257 & 0.3405 ($=0.31$)& 
0.333 & 0.642 & 0.843 & 0.409 & 0.526 & 0.6100  \\
Base & --- & ---  & 0.025 & 0.192 & 0.322 & 0.085 & 0.129 & 0.1617 ($$<1\text{e-6}$$) & 0.200 & 0.700 & 0.778 & 0.345 & 0.399 & 0.5050   \\
\midrule
\multicolumn{15}{l}{\textbf{Qwen2.5-7B-Math}} \\
\midrule
AERO & \cellcolor{table-blue!66} \textbf{10.8} & \cellcolor{table-blue!66} \textbf{9181} & 
0.214 & 0.552 & 0.719 & 0.289 & 0.382 & \cellcolor{table-blue!66} \textbf{0.4800} & 
0.476 & 0.811 & 0.905 & 0.503 & 0.626 & \cellcolor{table-blue!66} \textbf{0.7000}  \\
GRPO & 21.5 & 17334 & 
0.208 & 0.543 & 0.714 & 0.267 & 0.372 & 0.4690 (=0.05) & 
0.476 & 0.721 & 0.902 & 0.491 & 0.607 & 0.6860 \\
DAPO & 15.8 & 13743 & 
0.208 & 0.533 & 0.720 & 0.281 & 0.376 & 0.4750 (=0.29)& 
0.476 & 0.736 & 0.905 & 0.491 & 0.619 & 0.6910   \\
Base & --- & --- & 
0.138 & 0.358 & 0.539 & 0.179 & 0.196 & 0.3110 ($<1\text{e-6}$) & 
0.400 & 0.700 & 0.876 & 0.434 & 0.510 & 0.6220  \\
\bottomrule
\end{tabular}%
}
\end{table*}

We analyze the computational efficiency of each method by decomposing both the wall-clock time and the total FLOPS per step into their Data Generation (Rollout) and Model Update (Training) components. As shown in Figure~\ref{fig:efficiency_charts}, AERO demonstrates significant efficiency gains.

 AERO runs \textbf{1.8x faster per step} on the 1.5B model (19.1 vs. 34.3 minutes)  and \textbf{1.9x faster per step} on the 7B model (10.8 vs. 21.5 minutes) than the GRPO baseline. This speedup is driven by a 61$\%$ reduction in the model update time for 1.5B and 68$\%$ for 7B.
This efficiency is also reflected in computational cost. While Rollout FLOPS are identical for all methods, AERO requires \textbf{2.8x fewer} Training FLOPs than GRPO on the 1.5B model and \textbf{2.7x fewer} on the 7B model. Consequently, its total computational cost per step is reduced by \textbf{2.0x} and \textbf{1.9x}, respectively. AERO improves the end-to-end training efficiency. For zero-accuracy problems ($u=0$), AERO yields the lowest ratio$-$\textbf{0.261} vs. 0.336 (GRPO) and 0.345 (DAPO) on the 1.5B model, and \textbf{0.178} vs. 0.244 (GRPO) and 0.249 (DAPO) on the 7B model, indicating more effective use of the rollout budget. 

\textbf{Training Performance.}
The training reward curves (Figure~\ref{fig:training_rewards}) show that AERO demonstrates a stable learning trajectory, closely matching the performance of the GRPO and DAPO baselines throughout the training process.  In addition, Figure~\ref{fig:all-correct} shows that AERO consistently yields a higher proportion of All-correct  problems compared to GRPO and DAPO. Meanwhile, as shown in Figure~\ref{fig:high-partial}, all methods achieve a comparable ratio of High-Success Partial Problem.

\textbf{Testing Performance.} The primary results on the held-out test set are presented in Table~\ref{tab:main_results_combined}. AERO achieves an average Avg@8 score of \textbf{0.3451}, outperforming GRPO (0.3401) and DAPO (0.3405) on the 1.5B model, and \textbf{0.4800}, surpassing GRPO (0.4690) and DAPO (0.4750) on the 7B model. A similar trend is observed in the Pass@8 metric, where AERO scores \textbf{0.6121} on 1.5B and \textbf{0.7} on 7B.

\begin{table}[!h]
\centering
\caption{Computational budget and alternative sampling strategies for Qwen2.5-1.5B-Math. 
The best-performing method is highlighted in blue, and time-matched baselines for comparison are in light blue. Results generalize to other model sizes and configurations as shown in 
\S~\ref{app:additional_budget_baselines}.}
\vspace{2pt}
\label{tab:budget_baselines_main}
\begin{icmltable}
\adjustbox{max width=\columnwidth}{%
\begin{tabular}{@{}lcccc@{}}
\toprule
\textbf{Method} & \textbf{Time (min)} & \textbf{FLOPs (Total)} & \textbf{Avg@8} (P-value) & \textbf{Pass@8} \\
\midrule
GRPO & 34.3 & 3627 & 0.3401 ($=0.28$) & 0.6090 \\
DAPO & 26.1 & 2722 & 0.3405 ($=0.31$) & 0.6100 \\
AERO  & \cellcolor{table-blue!66} \textbf{19.1} & \textbf{1811} & \cellcolor{table-blue!66} \textbf{0.3452} & \cellcolor{table-blue!66} \textbf{0.6121} \\
N/2 & 23.7 & 1863 & 0.3211 ($=0.004$) & 0.5954 \\
0.32N &\cellcolor{table-blue!30} 20.1 & 1133 & \cellcolor{table-blue!30} 0.3184 ($=0.002$) & \cellcolor{table-blue!30} 0.5891 \\
N/4 & 16.3 & 906 & 0.3133 ($=0.001$) & 0.5870 \\
N/8 & 8.9 & 453 & 0.2820 ($<1\text{e-4}$) & 0.5791 \\
Base & --- & --- & 0.1617 ($<1\text{e-6}$) & 0.5050 \\
\bottomrule
\end{tabular}%
}  \end{icmltable}
\vspace{-.6em}
\end{table}

\subsubsection{Additional Baselines}

To better compare the compute and performance trade-off with AERO, we evaluate GRPO under Reduced-Budget Rollouts. We uniformly reduce the rollout budget per query from the default $N=16$ to smaller values ($N/2$,$0.32N$,$N/4$,$N/8$).
     

\begin{figure}[H] 
    \centering
    \vspace{-0.6em}
    \includegraphics[width=0.9\linewidth]{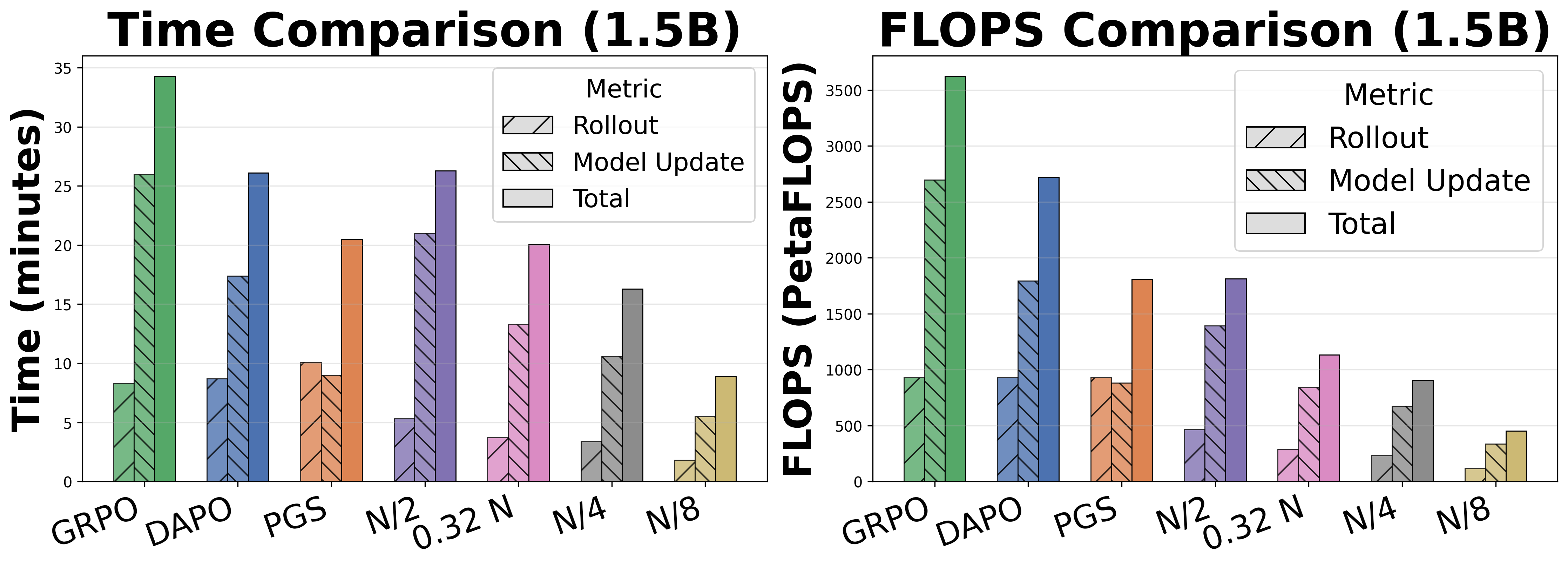}
    \caption{Training Time Cost and FLOPs across Baselines.}
    \label{fig:budget_reduced}
    \vspace{-.6em}
\end{figure}

Table~\ref{tab:budget_baselines_main} summarizes their efficiency and performance. Figure~\ref{fig:budget_reduced} reports the detailed time and FLOPs (rollout, model update, total).

Notably, AERO achieves substantially better cost-performance trade-offs across all model scales. On the 1.5B Math model, AERO requires less training time than the half-budget ($N/2$) setup and a similar amount of time as the $0.32N$ configuration, yet delivers significantly higher performance (Avg@8: 0.345 vs. 0.318).


\subsubsection{Performance on Code Generalization Tasks.}

Table~\ref{tab:code_efficiency_combined} presents the result.  AERO achieves the highest Avg@8 score (0.264) while requiring the least training time (6.95 min) and fewest FLOPs (2295 PetaFLOP), demonstrating that the efficiency gains observed on math reasoning transfer to code generation tasks.

\begin{table}[t]
\centering
\caption{Performance and efficiency comparison on code benchmarks for Qwen2.5-7B-Instruct-1M.}
\label{tab:code_efficiency_combined}
\begin{icmltable}
\adjustbox{max width=\columnwidth}{%
\begin{tabular}{@{}lcccccc c@{}}
\toprule
& \multicolumn{2}{c}{\textbf{Efficiency}} & \multicolumn{4}{c}{\textbf{Avg@8}} & \\
\cmidrule(lr){2-3}\cmidrule(lr){4-7}
\textbf{Method} & \textbf{Time (min)} & \textbf{FLOPs (PetaFLOP)} & \textbf{TACO} & \textbf{CodeContests} & \textbf{APPS} & \textbf{Total} & \textbf{$p$-value} \\
\midrule
AERO       & \cellcolor{table-blue!66} \textbf{6.95} & \cellcolor{table-blue!66} 2295
           & \cellcolor{table-blue!66} \textbf{0.179} & \cellcolor{table-blue!66} \textbf{0.321} & \cellcolor{table-blue!66} \textbf{0.337}
           & \cellcolor{table-blue!66} \textbf{0.264} & --- \\
GRPO       & 11.66 & 4488 & 0.157 & 0.296 & 0.298 & 0.238 & $<0.005$ \\
DAPO       & 7.75  & 2941 & 0.162 & 0.313 & 0.310 & 0.252 & $=0.034$ \\
Base Model & ---   & ---  & 0.094 & 0.195 & 0.188 & 0.149 & $<1\text{e-5}$ \\
\bottomrule
\end{tabular}
}  
\end{icmltable}
\end{table}


\subsection{Strategy Contribution Ablation}
\label{sec:strategy_ablation}

To quantify each strategy’s contribution to AERO’s efficiency gains, we measure per-step time savings from each component across all three model configurations.

\begin{figure}[H]
    \centering
        \vspace{-0.6em}
    \includegraphics[width=0.9\linewidth]{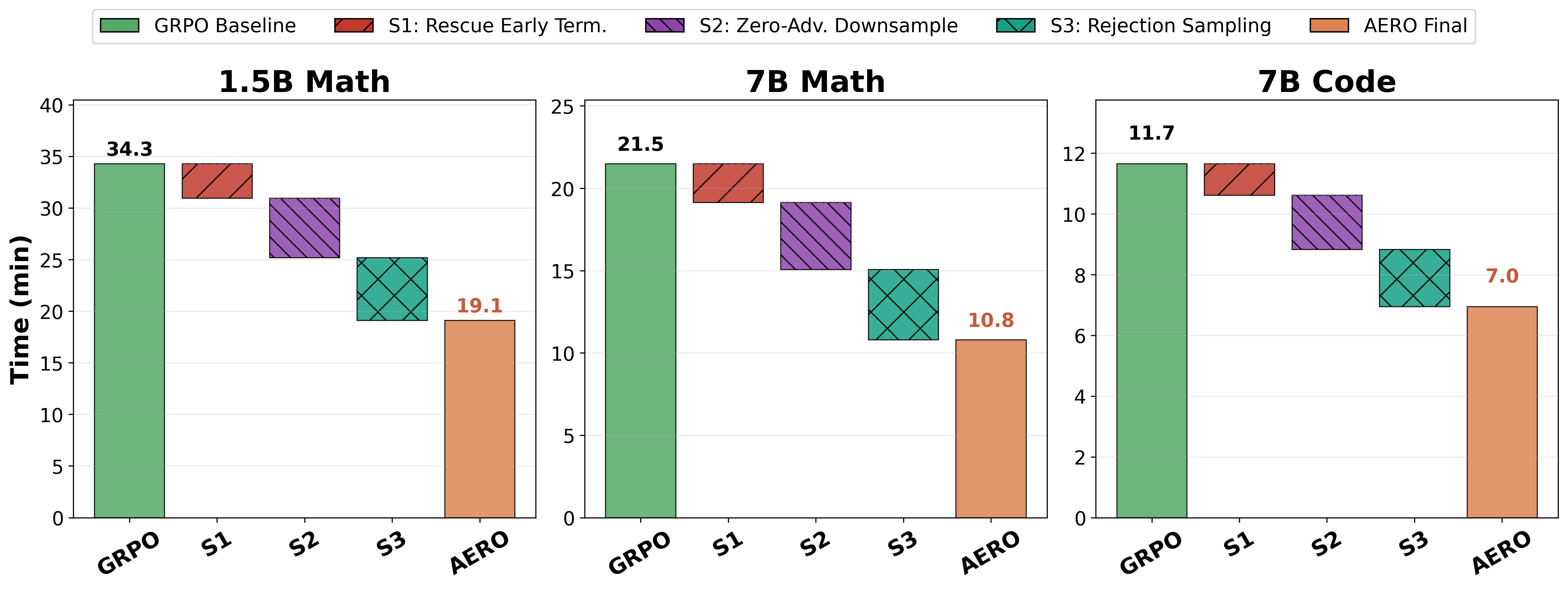}
    \caption{Waterfall decomposition of training time savings from GRPO to AERO across three models. Each bar shows the incremental reduction from: (S1) early termination on rescued queries, (S2) downsampling zero-advantage queries, and (S3) rejection sampling on low-success partial queries.}
    \label{fig:strategy_contribution}
\vspace{-.6em}
\end{figure}

\subsection{Ablation: Rejection Sampling}
\vspace{-5pt}
\begin{table}[H]
\centering
\caption{Ablation study: rejection sampling, advantage filtering, and Bayesian stabilization (Qwen2.5-Math-1.5B).}
\label{tab:ablation}
\begin{icmltable}
\adjustbox{max width=\columnwidth}{%
\begin{tabular}{@{}lcccc@{}}
\toprule
\textbf{Method} & \textbf{Time (min)} & \textbf{FLOPs (PF)} & \textbf{Avg@8} & \textbf{Pass@8} \\
\midrule
\textbf{AERO} & \cellcolor{table-blue}19.1 & \cellcolor{table-blue}1811 & \cellcolor{table-blue}0.345 & \cellcolor{table-blue}\textbf{0.612} \\
w/o Rejection Sampling & 22.9 {\scriptsize($\uparrow$19\%)} & 2099 {\scriptsize($\uparrow$16\%)} & 0.340 {\scriptsize($\downarrow$1.5\%)} & 0.606 {\scriptsize($\downarrow$1\%)} \\
w/o Advantage Filtering & 26.8 {\scriptsize($\uparrow$40\%)} & 2732 {\scriptsize($\uparrow$51\%)} & 0.329 {\scriptsize($\downarrow$5\%)} & 0.598 {\scriptsize($\downarrow$2\%)} \\
\midrule
\multicolumn{5}{l}{\textit{Bayesian Stabilization (batch=128)}} \\
\midrule
GRPO + Bayesian Prior  & 20.8 & 2186 & \cellcolor{table-blue}0.322 & 0.594 \\
GRPO & 21.9 & 2319 & 0.318 & 0.580 \\
\bottomrule
\end{tabular}%
}  
\end{icmltable}
\end{table}
\vspace{-5pt}

Here we evaluate the effect of disabling rejection sampling. 
Table~\ref{tab:ablation} (row 2) shows disabling rejection sampling leads to a massive \textbf{19\% increase in training time} and a \textbf{16\% increase in FLOPs}. 

To justify our choice of the downsampling ratio $k=1$, we examined the impact of varying $k$ from 1 to 4.  The computational cost, especially training time, grows monotonically with $k$ (see \S~\ref{app:downsampling_cost}).  For instance, moving from $k=1$ to $k=2$ increases training time by 7\%, to $k=3$ by 13\%, and to $k=4$ by 16\%.

\subsection{Non-Zero-Advantage Filtering.}
 
We also compare against simply filtering out zero-advantage rollouts while keeping the full budget $N=16$. As shown in Table~\ref{tab:ablation} (row 3), this naive filtering approach is less efficient and achieves lower performance than AERO.
We have see that the AERO gains higher efficiency while maintain a higher Avg@8 and Pass@8 score than purely filtering out the zero-advantage samples.


\subsection{Bayesian Stabilization: Gradient Norm Dynamics}

To highlight the impact of Bayesian estimation, we co
mpare the gradient norm behavior of the vanilla GRPO algorithm with a variant that incorporates a Bayesian prior for zero accuracy and all‑correct queries (denoted GRPO +Bayesian Prior). Both methods use a batch size of 128.

As shown in Table~\ref{tab:ablation} (rows 4--5), the Bayesian variant achieves higher Avg@8 and Pass@8 scores. 

The gradient norm analysis (Table~\ref{tab:bayesian_grad_norm} in \S~\ref{app:bayesian_grad}) shows that Bayesian estimation produces $\sim$1.5$\times$ larger gradient norms ($p < 0.001$).
Also, see \S~\ref{app:bayesian_grad} for full training dynamics.

\subsection{Effect of Iterative Rescue}

\begin{wrapfigure}{r}{0.5\columnwidth}
    \vspace{-1em}
    \centering
    \includegraphics[width=0.9\linewidth]{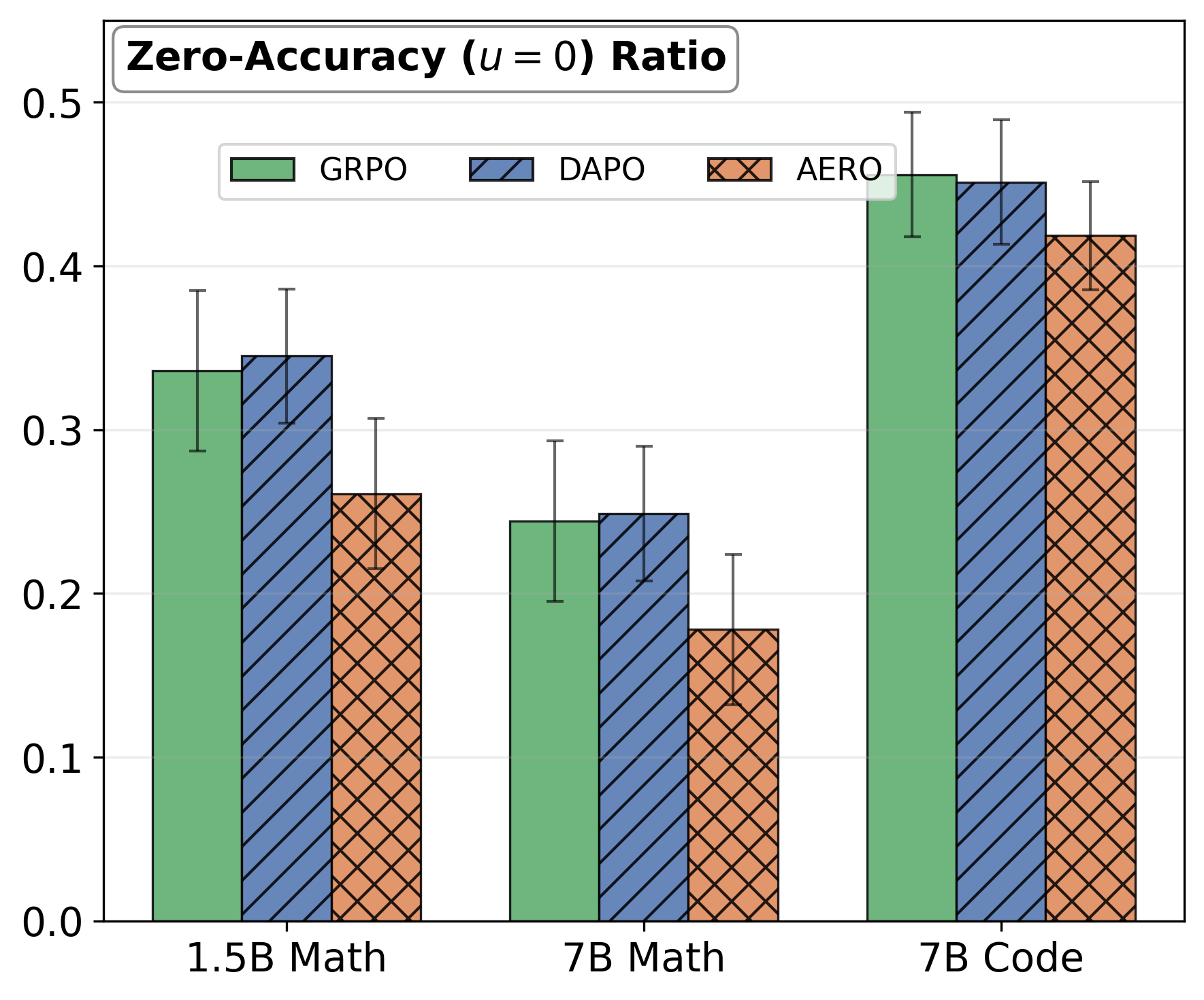}
    \caption{Average of Zero-Accuracy Ratio Across Training Steps}
    \label{fig:zero}
    \vspace{-1.5em}
\end{wrapfigure}
\vspace{-1pt}
Figure~\ref{fig:zero} compares the proportion of zero-accuracy problems across methods. AERO produces approximately $10\%$ fewer Zero-accuracy problems than GRPO and DAPO. 
Figure~\ref{fig:adv_overall} shows absolute advantage values across methods.  Larger advantage magnitudes enable stronger gradient signals for policy updates. AERO achieves approximately 30\% higher average advantage magnitude while also reducing zero-advantage samples, which would otherwise contribute no learning signal.

{\captionsetup[sub]{font=tiny}
\begin{figure}[ht]   
\centering

\begin{subfigure}{0.15\textwidth}
    \centering
    \includegraphics[width=\linewidth]{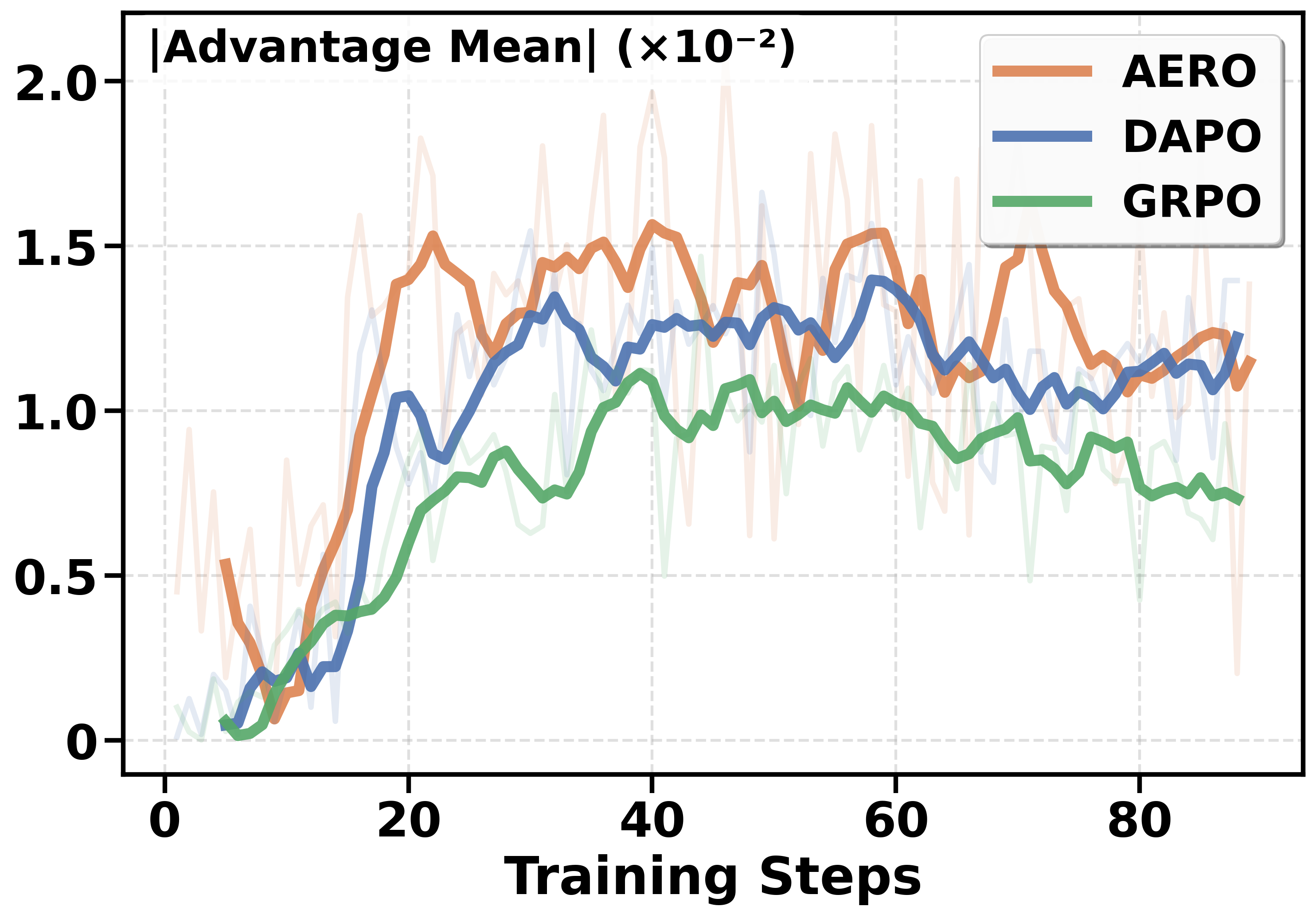}
    \caption{qwen2.5-math-1.5b}
    \label{fig:adv1}
\end{subfigure}\hfill
\begin{subfigure}{0.15\textwidth}
    \centering
    \includegraphics[width=\linewidth]{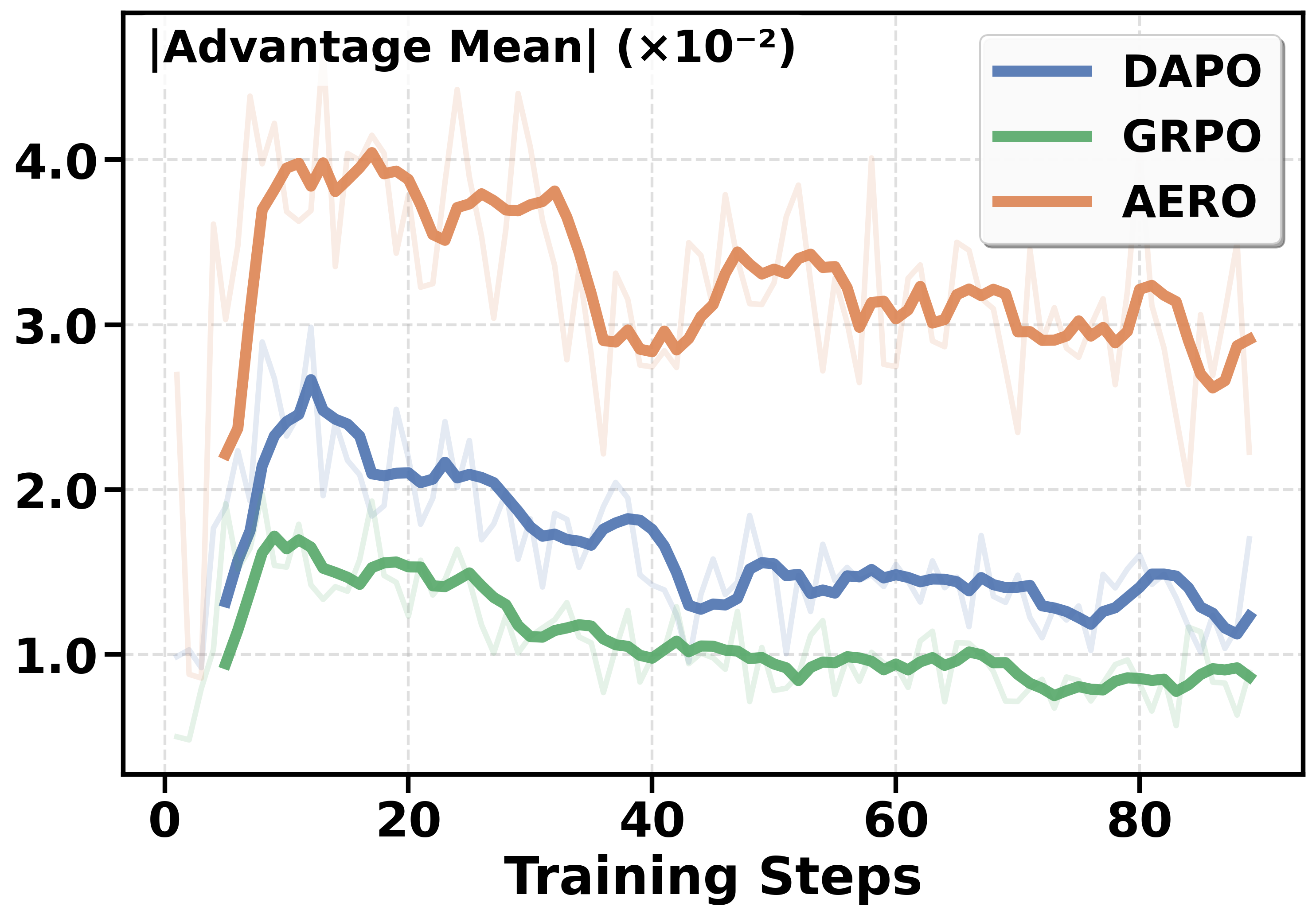}
    \caption{qwen2.5-math-7b}
    \label{fig:adv2}
\end{subfigure}\hfill
\begin{subfigure}{0.15\textwidth}
    \centering
    \includegraphics[width=\linewidth]{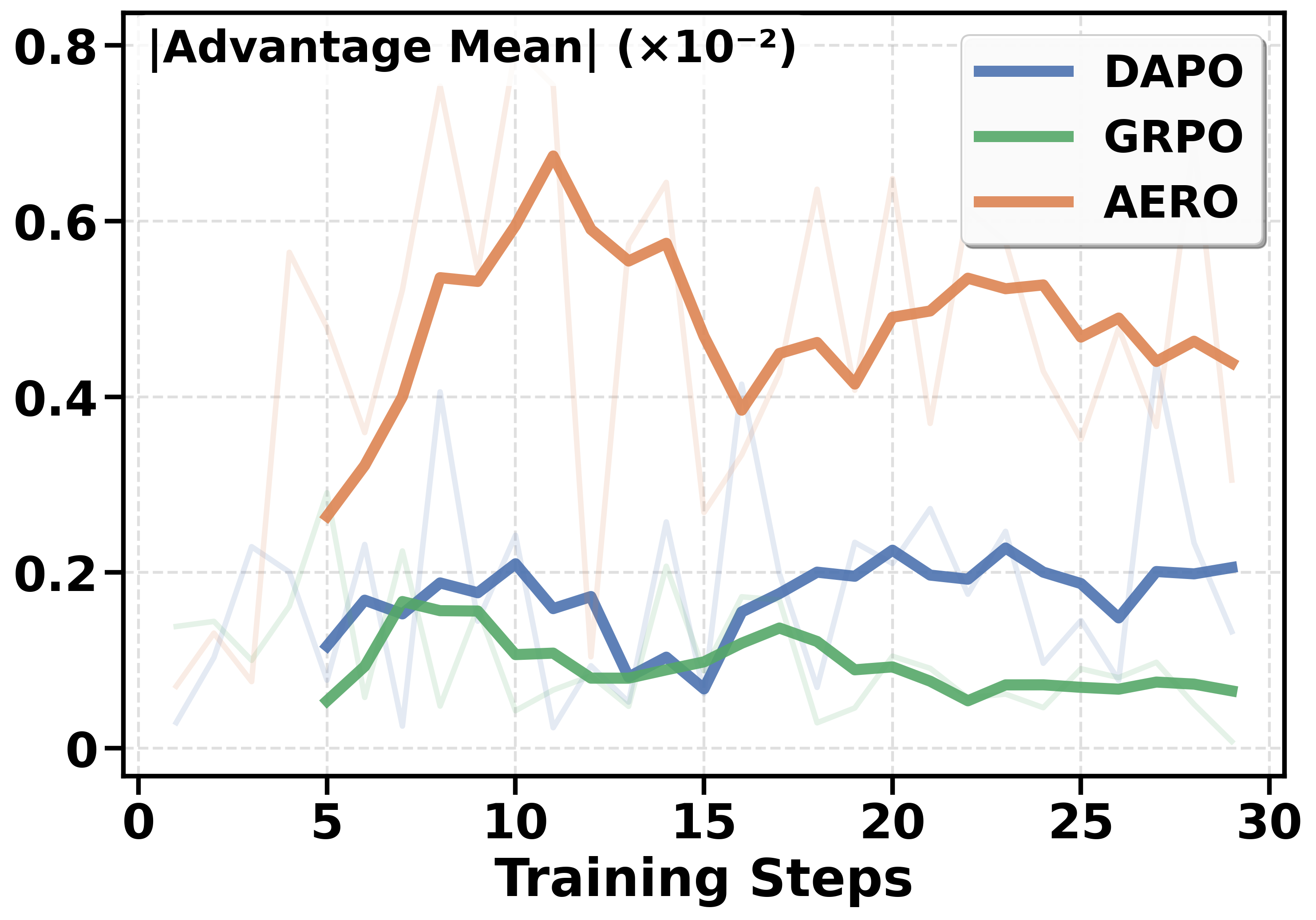}
    \caption{qwen2.5-7b-Instruct-1m}
    \label{fig:adv3}
\end{subfigure}

\caption{Absolute Advantage Value Comparison.}
\label{fig:adv_overall}
\end{figure}}

Additional analysis on whether it is beneficial to also down-sample high-success ($u \ge 0.5$) queries is provided in \S~\ref{sec:ablation_high_success_appendix}: the results show that discarding correct rollouts from such queries hurts accuracy with negligible time savings, justifying our decision to retain them.


\section{Conclusion}
We introduced AERO (Adaptive Efficient Rollout Optimization), which addresses key inefficiencies in GRPO through adaptive rollout reallocation, theoretically-grounded down-sampling, and Bayesian estimation to avoid gradient dead zones. On Qwen2.5 Math (1.5B, 7B), AERO reduces FLOPs by $\sim$2$\times$ and with$\sim$1.9$\times$ faster training  while improving Pass@8 and Avg@8. These gains transfer to code: on Qwen2.5-7B-Instruct, AERO achieves the best Avg@8 (0.264) with $\sim$2$\times$ fewer FLOPs and $\sim$1.7$\times$ faster training. AERO enables training less while learning more.

\section{Impact Statement}
This paper presents work whose goal is to advance the field of machine learning. There are many potential societal consequences of our work, none of which we feel must be specifically highlighted here.

\bibliographystyle{icml2026}
\bibliography{references}

\clearpage
\appendix
\onecolumn

\section{Theorem Supplementary}
\label{theorem_supp}
\subsection{Derivation of Policy-gradient form.}
\label{sup_proof_policy_gradient}

\begin{proof}
We consider the GRPO objective
\begin{equation}
J(\theta)
= \mathbb{E}_{q,\{o_i\}}\!\left[
\frac{1}{N}\sum_{i=1}^{N} \hat{A}_i \,\frac{1}{|o_i|}
\sum_{t=1}^{|o_i|}
\min\!\big(\rho_{i,t},\,\mathrm{clip}(\rho_{i,t},\,1-\epsilon,\,1+\epsilon)\big)
\;-\;\beta\, D_{\mathrm{KL}}(\pi_\theta \Vert \pi_{\mathrm{ref}})
\right],
\label{eq:grpo-obj}
\end{equation}
where the per-token importance ratio is
\(
\rho_{i,t}
= \dfrac{\pi_\theta(o_{i,t}\mid q, o_{i,<t})}{\pi_{\mathrm{old}}(o_{i,t}\mid q, o_{i,<t})}.
\)
Here \(\pi_{\mathrm{old}}\) and \(\pi_{\mathrm{ref}}\) are fixed (no gradients).

Differentiating \eqref{eq:grpo-obj} and using
$\nabla_\theta \rho_{i,t}=\rho_{i,t}\,\nabla_\theta \log\pi_\theta(o_{i,t}\!\mid q,o_{i,<t})$,
we obtain the exact gradient (for the one–sided clipping used in GRPO):
\begin{equation}
\label{eq:full-grad-with-ind}
\nabla_\theta J(\theta)
= \mathbb{E}_{q,\{o_i\}}\!\left[
\frac{1}{N}\sum_{i=1}^{N}\frac{\hat A_i}{|o_i|}
\sum_{t=1}^{|o_i|}
\mathbb{I}\!\big[\rho_{i,t}\le 1+\epsilon\big]\,
\rho_{i,t}\,\nabla_\theta \log\pi_\theta(o_{i,t}\!\mid q,o_{i,<t})
\right]
\;-\;\beta\,\nabla_\theta D_{\mathrm{KL}}(\pi_\theta\Vert \pi_{\mathrm{ref}}).
\end{equation}

In the common small-step (trust-region) regime where the policy ratios remain within the clipping range,
$\rho_{i,t}\le 1+\epsilon$ for all $t$,\footnote{This is satisfied exactly when $\epsilon=0$, or approximately when updates are sufficiently small so that ratios stay inside the trust region.}
the indicator in \eqref{eq:full-grad-with-ind} is identically one and the gradient reduces to
\begin{equation}
\label{eq:simplified-grad2}
\nabla_\theta J(\theta)
= \mathbb{E}_{q,\{o_i\}}\!\left[
\frac{1}{N}\sum_{i=1}^{N}\frac{\hat A_i}{|o_i|}
\sum_{t=1}^{|o_i|}
\rho_{i,t}\,\nabla_\theta \log\pi_\theta(o_{i,t}\!\mid q,o_{i,<t})
\right]
\;-\;\beta\,\nabla_\theta D_{\mathrm{KL}}(\pi_\theta\Vert \pi_{\mathrm{ref}}).
\end{equation}

Since \(\pi_{\mathrm{old}}\) is constant w.r.t.~\(\theta\),
\beq 
\nabla_\theta \rho_{i,t} & 
= \frac{\nabla_\theta \pi_\theta(o_{i,t}\mid q, o_{i,<t})}{\pi_{\mathrm{old}}(o_{i,t}\mid q, o_{i,<t})}
= \frac{\pi_\theta(o_{i,t}\mid q, o_{i,<t} )}{\pi_{\mathrm{old}}(o_{i,t}\mid q,o_{i,<t})} \nabla_\theta \log \pi_\theta(o_{i,t}\mid q, o_{i,<t}) \\ &
= \rho_{i,t}\, \nabla_\theta \log \pi_\theta(o_{i,t}\mid q,o_{i,<t}).
\label{eq:ratio-grad}
\beq

Define \(f_\epsilon(\rho)=\min\!\big(\rho,\,\mathrm{clip}(\rho,1-\epsilon,1+\epsilon)\big)\).

Thus, we have 
\bea 
\min\!\big(\rho_{i,t},\,\mathrm{clip}(\rho_{i,t},\,1-\epsilon,\,1+\epsilon)\big)=:f_\epsilon(\rho_{i,t}).
\eea 

A simple case analysis yields
\[
f_\epsilon(\rho) =
\begin{cases}
\rho, & \rho \le 1+\epsilon,\\
1+\epsilon, & \rho > 1+\epsilon,
\end{cases}
\qquad\Rightarrow\qquad
\frac{\partial f_\epsilon}{\partial \rho} =
\begin{cases}
1, & \rho \le 1+\epsilon,\\
0, & \rho > 1+\epsilon,
\end{cases}
\]
(except at \(\rho=1+\epsilon\) where a subgradient can be taken).
Thus,
\begin{equation}
\nabla_\theta f_\epsilon(\rho_{i,t})
= \mathbb{I}\!\left[\rho_{i,t}\le 1+\epsilon\right]\,
\nabla_\theta \rho_{i,t}
= \mathbb{I}\!\left[\rho_{i,t}\le 1+\epsilon\right]\,
\rho_{i,t}\,\nabla_\theta \log \pi_\theta(o_{i,t}\mid \cdot),
\label{eq:clipped-grad}
\end{equation}
where we used \eqref{eq:ratio-grad}.

Using linearity of expectation and taking gradients inside,
\begin{align}
\nabla_\theta J(\theta)
&= \mathbb{E}_{q,\{o_i\}}\!\left[
\frac{1}{N}\sum_{i=1}^{N} \hat{A}_i \,\frac{1}{|o_i|}
\sum_{t=1}^{|o_i|}
\nabla_\theta f_\epsilon(\rho_{i,t})
\right]
\;-\;\beta\, \nabla_\theta D_{\mathrm{KL}}(\pi_\theta \Vert \pi_{\mathrm{ref}}) \nonumber\\
&= \mathbb{E}_{q,\{o_i\}}\!\left[
\frac{1}{N}\sum_{i=1}^{N} \frac{\hat{A}_i}{|o_i|}
\sum_{t=1}^{|o_i|}
\mathbb{I}\!\left[\rho_{i,t}\le 1+\epsilon\right]\,
\rho_{i,t}\,\nabla_\theta \log \pi_\theta(o_{i,t}\mid q,o_{i,<t})
\right]
\;-\;\beta\, \nabla_\theta D_{\mathrm{KL}}(\pi_\theta \Vert \pi_{\mathrm{ref}}).
\label{eq:full-grad}
\end{align}

\textbf{Unclipped (or small-step) limit.}
If we assume  \(\rho_{i,t}\le 1+\epsilon\) holds for all $t$, then
\(\mathbb{I}[\rho_{i,t}\le 1+\epsilon]=1\),
\(f_\epsilon(\rho_{i,t})=\rho_{i,t}\),
and \eqref{eq:full-grad} simplifies to
\begin{equation}
\nabla_\theta J(\theta)
= \mathbb{E}_{q,\{o_i\}}\!\left[
\frac{1}{N}\sum_{i=1}^{N} \frac{\hat{A}_i}{|o_i|}
\sum_{t=1}^{|o_i|}
\rho_{i,t}\,\nabla_\theta \log \pi_\theta(o_{i,t}\mid q,o_{i,<t})
\right]
\;-\;\beta\, \nabla_\theta D_{\mathrm{KL}}(\pi_\theta \Vert \pi_{\mathrm{ref}}),
\label{eq:simple-grad}
\end{equation}

When some tokens are upper-clipped, define the un-clipped index set
$U_i=\{t: \rho_{i,t}\le 1+\epsilon\}$ and the clipped set $C_i=\{t:\rho_{i,t}>1+\epsilon\}$.
By the piecewise derivative $\partial f_\epsilon/\partial\rho=1$ for $\rho\le 1+\epsilon$ and $0$ for $\rho>1+\epsilon$, we have
\begin{equation}
\nabla_\theta J(\theta)
= \mathbb{E}_{q,\{o_i\}}\!\left[
\frac{1}{N}\sum_{i=1}^{N} \frac{\hat A_i}{|o_i|}
\sum_{t\in U_i} \rho_{i,t}\,\nabla_\theta \log \pi_\theta(o_{i,t}\mid q,o_{i,<t})
\right]
-\beta\,\nabla_\theta D_{\mathrm{KL}}(\pi_\theta\Vert\pi_{\mathrm{ref}}).
\label{eq:clipped-grad-final}
\end{equation}
In other words, only tokens whose ratios remain within the clip window contribute to the policy-gradient term; upper-clipped tokens contribute zero to that term, but the KL regularizer still applies.
\end{proof}


\subsection{Proof of Theorem~\ref{thm:grad_norm_maximization}} 
\vspace{5pt}
\label{sup_proof_thm_grad_norm_maximization}

\begin{theorem}[GRPO Gradient Norm]
Let a training subset of $n = c+m$ rollouts consists of $c$ correct rollouts ($r_i=1$) from a set $F_C$ and $m$ incorrect rollouts ($r_i=0$) from a set $F_S$.

Consider the group relative policy objective function, under the simplifying assumptions that all rollouts have a uniform length $L$, and the KL penalty $\beta=0$, based on \eqref{eq:simplified-grad}, let $G(\theta)$ be the group gradient under this, which is
\begin{equation}
\label{label:gradient_for_theorem}
G(\theta)
= \mathbb{E}_{q,\{o_i\}}\!\left[
\frac{1}{NL}\sum_{i=1}^{N}
\sum_{t=1}^{L} \hat A_i
\rho_{i,t}\,\nabla_\theta \log\pi_\theta(o_{i,t}\!\mid q,o_{i,<t})
\right]
\end{equation}
where $\rho_{i,t} = \frac{\pi_\theta(o_{i,t} | q, o_{i,<t})}{\pi_{old}(o_{i,t} | q, o_{i,<t})}$. We have its squared norm,
\[
\|\hat{G}\|^2 \le \frac{c \cdot m}{(c+m)^2} \cdot \mathbb{E}\|\bar{v}_C - \bar{v}_F\|^2,
\]
where $\frac{c \cdot m}{(c+m)^2}$ 
is a scalar factor dependent on the group set composition. The vectors $\bar{v}_C = \frac{1}{c}\sum_{j \in F_C} v_j$ and $\bar{v}_F = \frac{1}{m}\sum_{j \in F_S} v_j$, with $v_j = \frac{1}{L}\sum_{t=1}^{L} \nabla_\theta \log\pi_\theta(o_{j,t}| q, o_{i,<t})$,and $\|\bar{v}_C - \bar{v}_F\|^2$ represents the squared distance between the mean policy gradient directions of correct and incorrect rollouts.
\end{theorem}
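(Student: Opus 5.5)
The plan is to compute the inner (per-query) gradient in closed form for binary rewards and then pass the norm through the expectation with Jensen's inequality. Throughout I identify the group size $N$ in \eqref{eq:gradient_for_theorem} with the curated subset size $n=c+m$, and treat the composition $(c,m)$ as fixed inside the expectation, i.e.\ I work conditionally on the curated group. I take $c,m\ge 1$, so that $\mathrm{std}(\mathbf r)>0$ and $\bar v_C,\bar v_F$ are defined.

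\textbf{Step 1 (advantages).} With $r_i\in\{0,1\}$, the group mean is $u=c/n$. The (population) standard deviation is $\sqrt{u(1-u)}=\sqrt{cm}/n$. Hence every correct rollout has
\[
\hat A_i=\frac{m/n}{\sqrt{cm}/n}=\sqrt{m/c},
\]
and every incorrect rollout has
\[
\hat A_i=\frac{-c/n}{\sqrt{cm}/n}=-\sqrt{c/m}.
\]
The advantage is therefore constant on $F_C$ and on $F_S$.

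\textbf{Step 2 (collapse the inner sum).} I evaluate the gradient at the point where the policy ratios equal one, $\theta=\theta_{\text{old}}$, which is the on-policy/small-step regime already invoked to obtain \eqref{eq:simplified-grad}; there $\rho_{i,t}=1$. The inner sum over $t$ then becomes $L\,v_i$, and the bracket in \eqref{eq:gradient_for_theorem} reduces to
\[
X:=\frac1n\Big(\sqrt{m/c}\,c\,\bar v_C-\sqrt{c/m}\,m\,\bar v_F\Big)=\frac{\sqrt{cm}}{n}\,(\bar v_C-\bar v_F).
\]
Consequently $\|X\|^2=\frac{cm}{(c+m)^2}\|\bar v_C-\bar v_F\|^2$.

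\textbf{Step 3 (Jensen).} Since $G(\theta)=\mathbb E[X]$ and $\|\cdot\|^2$ is convex, $\|\mathbb E X\|^2\le\mathbb E\|X\|^2$. Because the factor $cm/(c+m)^2$ is deterministic given the composition, it can be pulled out of the expectation, which gives the claimed bound.

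The main obstacle is the ratio $\rho_{i,t}$ away from $\theta_{\text{old}}$, where it varies across tokens and prevents the exact collapse in Step 2. I would first handle it as above, by working at $\theta=\theta_{\text{old}}$, consistent with the unclipped small-step assumption behind \eqref{eq:simplified-grad}. As a fallback, I would absorb the ratios into the score vectors by redefining $v_j=\frac1L\sum_t\rho_{j,t}\nabla_\theta\log\pi_\theta(o_{j,t}\mid q,o_{j,<t})$; Steps 1 through 3 then go through verbatim with these ratio-weighted $v_j$. A secondary point is that if $(c,m)$ varies across queries, the bound holds conditionally on the composition, and the scalar stays inside the expectation in the unconditional version.
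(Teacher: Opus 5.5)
Your proposal is correct and takes essentially the same route as the paper's proof. Both compute the two constant standardized advantages, collapse the group gradient to $\frac{\sqrt{cm}}{c+m}(\bar v_C-\bar v_F)$ (the paper writes this as $\frac{cm}{\sigma(c+m)^2}(\bar v_C-\bar v_F)$ with $\sigma^2=cm/(c+m)^2$), and finish with Jensen's inequality, which the paper calls Cauchy--Schwarz. You also state explicitly the assumptions the paper uses silently: setting $\rho_{i,t}=1$ at $\theta=\theta_{\text{old}}$, holding the composition $(c,m)$ fixed, and requiring $c,m\ge 1$.
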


\begin{proof}

By Equation~\ref{eq:simplified-grad2},  the gradient of this objective with respect to the policy parameters $\theta$ is, 
\begin{equation}
\label{eq:simplified-grad2}
\nabla_\theta J(\theta)
= \mathbb{E}_{q,\{o_i\}}\!\left[
\frac{1}{N}\sum_{i=1}^{N} \frac{\hat A_i}{|o_i|}
\sum_{t\in U_i} \rho_{i,t}\,\nabla_\theta \log \pi_\theta(o_{i,t}\mid q,o_{i,<t})
\right]
-\beta\,\nabla_\theta D_{\mathrm{KL}}(\pi_\theta\Vert\pi_{\mathrm{ref}}).
\end{equation}

Under simplified assumption, we have its simplified form as

\begin{equation}
\nabla_\theta J(\theta)
= \mathbb{E}_{q,\{o_i\}}\!\left[
\frac{\hat A_i}{NL}\sum_{i=1}^{N}\sum_{t=1}^L  \rho_{i,t}\,\nabla_\theta \log \pi_\theta(o_{i,t}\mid q,o_{i,<t})
\right].
\end{equation}

Let the per-trajectory policy gradient direction vector be $v_i = \nabla_\theta \left( \sum_{t=1}^{L} \log \pi_\theta(o_{i,t}\mid q,o_{i,<t}) \right)$. The random group gradient for a specific query is defined as:
$$
\hat{G}(\theta) = \frac{1}{N} \sum_{i=1}^{N} \hat{A}_i v_i
$$
where $\hat{A}_i$ is the standardized advantage. For Bernoulli rewards, $\hat{A}_i = a_i = (r_i - \mu)/\sigma$. Thus, The normalized group gradient over the selected subset is defined as:
$$
\hat{G}(\theta) = \frac{1}{N} \sum_{j=1}^{N} a_j v_j = \frac{1}{c+m} \left( \sum_{i \in F_C} a_+ v_i + \sum_{i \in F_S} a_- v_i \right).
$$


where $a_j = (r_j - \mu)/\sigma$ is the standardized advantage, with $\mu = c/(c+m)$ and $\sigma^2 = \mu(1-\mu)$. Specifically, $a_+ = (1-\mu)/\sigma$ for correct samples and $a_- = -\mu/\sigma$ for incorrect samples.

We approximate the group gradient by using the average direction vectors $\bar{v}_C = \frac{1}{c}\sum_{j \in F_C} v_j$ and $\bar{v}_F = \frac{1}{m}\sum_{j \in F_S} v_j$:
$$
\hat{G}(\theta) = \frac{1}{c+m} \left( c \cdot a_+ \cdot \bar{v}_C + m \cdot a_- \cdot \bar{v}_F \right)
$$
Substituting the expressions for $a_+, a_-, \mu$, and $1-\mu$:
\begin{align*}
\hat{G}(\theta) &= \frac{1}{c+m} \left( c \cdot \frac{m/(c+m)}{\sigma} \bar{v}_C - m \cdot \frac{c/(c+m)}{\sigma} \bar{v}_F \right) \\
&= \frac{cm}{\sigma(c+m)^2} (\bar{v}_C - \bar{v}_F)
\end{align*}

The squared norm of expecation of this approximate gradient is:
$$
\|G(\theta)\|^2 = \left( \frac{cm}{\sigma(c+m)^2} \right)^2 \|\mathbb E (\bar{v}_C - \bar{v}_F)\|^2
$$

Now, we substitute the expression for the variance, $\sigma^2 = \mu(1-\mu) = \frac{cm}{(c+m)^2}$:
\begin{align*}
\|G(\theta)\|^2 &= \frac{(cm)^2}{\sigma^2(c+m)^4} \|\mathbb E (\bar{v}_C - \bar{v}_F)\|^2 \\
&= \frac{(cm)^2}{\left(\frac{cm}{(c+m)^2}\right)(c+m)^4} \|\mathbb E(\bar{v}_C - \bar{v}_F)\|^2 \\
&\leq \frac{cm}{(c+m)^2} \mathbb E\|\bar{v}_C - \bar{v}_F\|^2,
\end{align*}
where the last inequality is based on Cauchy Schwarz inequality.


\end{proof}

\subsection{Proof of Lemma~\ref{lma:grad_norm_maximization}}
\label{sup_proof_lemma_grad_norm_maximization}
\vspace{5pt}
\begin{lemma}[Maximization of the GROP Gradient Norm]
For the same set up in Theorem \ref{thm:grad_norm_maximization}, 
\[
\|\hat{G}\|^2 \le z \cdot \mathbb{E}\|\bar{v}_C - \bar{v}_F\|^2,
\] the balance term $z$ is maximized if and only if $m=c$.
\end{lemma}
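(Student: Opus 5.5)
The inequality itself is exactly the conclusion of Theorem~\ref{thm:grad_norm_maximization}, with $z=\frac{cm}{(c+m)^2}$. So the plan is to quote that bound directly and spend all the effort on the optimization of the balance term.

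First, note that $c\ge 1$ and $m\ge 1$. This is the regime where $\sigma^2=\mu(1-\mu)>0$ and the standardized advantages in the theorem are defined. For a fixed number $c$ of correct rollouts (the quantity we keep, with $m=k c$ the number of retained incorrect ones), I would treat $z$ as a function of $m$ and write
\[
z(m)=\frac{cm}{(c+m)^2},
\]
extending it to real $m>0$ first.

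Next, I would prove the maximization in one line via AM--GM. Since $(c+m)^2\ge 4cm$, with equality if and only if $c=m$, we get $z\le \tfrac14$, and this value is attained exactly when $m=c$.

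Equivalently, I would present the identity
\[
\frac14 - \frac{cm}{(c+m)^2} = \frac{(c-m)^2}{4(c+m)^2}\ \ge 0,
\]
which gives both directions of the ``if and only if'' at once. The right-hand side vanishes precisely when $m=c$. As a sanity check one can also use calculus: $\partial z/\partial m = c(c-m)/(c+m)^3$, so $z$ increases for $m<c$ and decreases for $m>c$, giving a unique maximizer.

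The same identity shows that $z$ depends only on the mixing proportion $\mu=c/(c+m)$, namely $z=\mu(1-\mu)$. Hence the maximum is also the unique maximizer over $\mu\in(0,1)$, namely $\mu=\tfrac12$. This translates to the downsampling ratio $k=m/c=1$, and because the maximizer is unique over the reals it is in particular unique over integers.

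The main obstacle is interpretive rather than technical. The lemma maximizes only the scalar factor of an upper bound, not $\|G(\theta)\|^2$ itself. I would therefore state explicitly that the claim concerns $z$, holding the directional term $\mathbb{E}\|\bar v_C-\bar v_F\|^2$ fixed; that term itself depends on which rollouts are kept. I would also note the edge cases $c=0$ or $m=0$, where $z=0$, which matches the zero-advantage dead zone discussed earlier.
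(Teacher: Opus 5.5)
Your proposal is correct and follows the same route as the paper: take the bound from Theorem~\ref{thm:grad_norm_maximization} with $z=\frac{cm}{(c+m)^2}$, then maximize the balance term over $m$ for fixed $c$. The one difference is that the paper only sets $f'(m)=0$ to locate the critical point $m=c$, whereas your identity $\frac14 - z = \frac{(c-m)^2}{4(c+m)^2}$ (or the sign of $f'(m)=c(c-m)/(c+m)^3$) proves that $m=c$ is the unique global maximizer, which is what the ``if and only if'' actually requires.
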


\begin{proof}
To maximize this expression with respect to $m$ (for a fixed $c$), we focus on the balance term. Let $f(m) = \frac{cm}{(c+m)^2}$. Taking the derivative with respect to $m$ and setting it to zero yields $m=c$. Therefore, the balance term is maximized if and only if $m=c$, which corresponds to setting $k=1$. Under the assumption that the directional term is approximately constant for small changes in the composition of $F_S$, this choice maximizes the overall gradient norm.
\end{proof}

\section{Experiment Supplementary}



\subsection{Datasets}

Our data set spanning on mathematical reasoning and code questions. Below is the data set we used.

\begin{table}[h!]
    \centering
    \caption{Overview of Training and Evaluation Datasets}
    \label{tab:datasets_appendix}
    \begin{tabular}{@{}p{3.5cm} p{2.5cm} p{8cm}@{}}
        \toprule
        \textbf{Dataset} & \textbf{Type} & \textbf{Description} \\
        \midrule
        \multicolumn{3}{l}{\textbf{Training Dataset (Math Reasoning)}} \\
        \cmidrule(r){1-1}
        OpenR1-AERO & Training & A 40k curated math problems from the OpenR1-Math-220k corpus, used by  \cite{zhao20251}. \\
        \addlinespace[1em] 
        \multicolumn{3}{l}{\textbf{Evaluation Datasets}} \\
        \cmidrule(r){1-1}
        AIME (2024) & Evaluation & American Invitational Mathematics Examination; competition-level high school math. \\
        \addlinespace
        AMC (2023) & Evaluation & American Mathematics Competitions; qualifier benchmark for AIME. \\
        \addlinespace
        MATH-500 & Evaluation & 500-problem subset of the MATH benchmark, spanning pre-algebra to calculus \citep{lightman2023let}. \\
        \addlinespace
        MinervaMath & Evaluation & Quantitative reasoning problems across STEM fields \citep{lewkowycz2022solving}. \\
        \addlinespace
        OlympiadBench & Evaluation & National and international Olympiad problems requiring advanced reasoning \citep{he2024olympiadbench}. \\
        \midrule
        \multicolumn{3}{l}{\textbf{Training Dataset (Code Generation)}} \\
        \cmidrule(r){1-1}
        Prime-1.4K & Training & 1.4k curated code data from the original Eurus-45k, used by \cite{cui2025process}. \\
        \addlinespace[1em] 
        \multicolumn{3}{l}{\textbf{Evaluation Datasets}} \\
        \cmidrule(r){1-1}
        APPS  & Evaluation & \cite{hendrycks2021measuring} \\
          CodeContests & Evaluation & \cite{li2022competition}  \\
        \addlinespace
        TACO & Evaluation & \cite{li2023taco}. \\
        \bottomrule
    \end{tabular}
\end{table}

\newpage

\subsection{Baselines and Benchmarks}

\subsubsection{Evaluation Metrics}

We evaluate AERO using the following comprehensive metrics:

\textbf{Primary Performance Metrics:}
\begin{itemize}
    
    \item \textbf{Pass@$n$:} Traditional pass@$n$ metric for problems with exactly $n$ samples:
    \begin{equation}
    \text{Pass@}n = \frac{1}{|P_n|} \sum_{p \in P_n} \mathbb{I}\left[\max_{i=1}^n c_{p,i} = 1\right]
    \end{equation}
    where $P_n$ is the set of problems allocated exactly $n$ samples, and \(c_{p,i}\in\{0,1\}\) is correctness of the \(i\)-th sample for problem \(p\)..

    \item \textbf{Avg@n:} For the same set,
\begin{equation}
\text{Avg@}n \;=\; \frac{1}{|P_n|}\sum_{p\in P_n}\; \frac{1}{n}\sum_{i=1}^{n} c_{p,i}.
\end{equation}
    This measures expected performance under the learned allocation policy.


    \item \textbf{Computational Efficiency:} Total FLOPs per step including rollout generation and training:
    \begin{equation}
    \text{Total FLOPs} = \text{FLOPs}_{\text{rollout}} + \text{FLOPs}_{\text{training}}
    \end{equation}
    where $\text{FLOPs}_{\text{training}} = 6 \times N_{\text{params}} \times N_{\text{tokens}}$ for forward and backward passes, and $\text{FLOPs}_{\text{rollout}} = 2 \times N_{\text{params}} \times N_{\text{tokens}}$, 
\end{itemize}
Here, \( N_{\text{params}} \) denotes the total number of model parameters, and \( N_{\text{tokens}} \) represents the total number of tokens across all forward passes, including both input and generated tokens during rollout and training per step.

\subsection{Additional Results}

\subsubsection{Additional Budget Baseline Results}
\label{app:additional_budget_baselines}

Table~\ref{tab:budget_baselines_appendix} presents additional budget baseline comparisons for Qwen2.5-7B-Math and Qwen2.5-1.5B-Base models. Consistent with the main results, AERO achieves the best efficiency-performance trade-off across all model configurations. For the 7B model, AERO matches the training time of the reduced-budget baseline ($N=6$) while achieving significantly higher accuracy (Avg@8: 0.48 vs. 0.455). The advantage is even more pronounced on the Base model, where AERO delivers a 4$\times$ improvement in Avg@8 compared to the time-matched $N=5$ baseline (0.234 vs. 0.056), demonstrating that adaptive rollout allocation is particularly beneficial for weaker models that produce more zero-accuracy queries.

\begin{table}[!h]
\centering
\caption{Computational budget and alternative sampling strategies. 
The best-performing method is highlighted in blue, and time-matched baselines for comparison are in light blue.}
\vspace{2pt}
\label{tab:budget_baselines_appendix}
\begin{icmltable}
\adjustbox{max width=\columnwidth}{%
\begin{tabular}{@{}lcccc@{}}
\toprule
\textbf{Method} & \textbf{Time (min)} & \textbf{FLOPs (Total)} & \textbf{Avg@8} (P-value) & \textbf{Pass@8} \\
\midrule
\multicolumn{5}{l}{\textbf{Qwen2.5-1.5B-Math}} \\
\midrule
GRPO & 34.3 & 3627 & 0.3401 ($=0.28$) & 0.6090 \\
DAPO & 26.1 & 2722 & 0.3405 ($=0.31$) & 0.6100 \\
AERO  & \cellcolor{table-blue!66} \textbf{19.1} & \textbf{1811} & \cellcolor{table-blue!66} \textbf{0.3452} & \cellcolor{table-blue!66} \textbf{0.6121} \\
N/2 & 23.7 & 1863 & 0.3211 ($=0.004$) & 0.5954 \\
0.32N &\cellcolor{table-blue!30}   20.1 & 1133 & \cellcolor{table-blue!30}  0.3184 ($=0.002$) & \cellcolor{table-blue!30}  0.5891 \\
N/4 & 16.3 & 906 & 0.3133 ($=0.001$) & 0.5870 \\
N/8 & 8.9 & 453 & 0.2820 ($<1\text{e-4}$) & 0.5791 \\
Base & --- & --- & 0.1617 ($<1\text{e-6}$) & 0.5050 \\
\midrule
\multicolumn{5}{l}{\textbf{Qwen2.5-7B-Math}} \\
\midrule
AERO & \cellcolor{table-blue!50}10.8 & 9181 & \cellcolor{table-blue!66} 0.4800 & \cellcolor{table-blue!66} 0.7000 \\
DAPO & 15.8 & 13743 & 0.4750 ($=0.29$) & 0.6910 \\
GRPO & 21.5 & 17334 & 0.4690 ($=0.05$) & 0.6860 \\
N=6 & \cellcolor{table-blue!30}  10.6 & 6500 & \cellcolor{table-blue!30}  0.4550 ($=0.003$) & \cellcolor{table-blue!30}  0.6790 \\
Base & --- & --- & 0.3110($<1\text{e-6}$) & 0.6220 \\
\midrule
\multicolumn{5}{l}{\textbf{Qwen2.5-1.5B-Base}} \\
\midrule
AERO & \cellcolor{table-blue!66} 19 & 1849 & \cellcolor{table-blue!66} 0.2340 & \cellcolor{table-blue!66} 0.4990 \\
DAPO & 23.8 & 2289 & 0.2260 ($=0.12$) & 0.4770 \\
GRPO & 36.4 & 3714 & 0.2120 ($=0.006$) & 0.4460 \\
N=5 & \cellcolor{table-blue!30}  19.8 & 1160 & \cellcolor{table-blue!30}  0.0560 ($<1\text{e-6}$) & \cellcolor{table-blue!30}  0.2510 \\
Base & --- & --- & 0.0340($<1\text{e-6}$) & 0.1820 \\
\bottomrule
\end{tabular}%
}  \end{icmltable}
\vspace{-.6em}
\end{table}

\subsubsection{Downsampling Ratio Cost Analysis}
\label{app:downsampling_cost}


\begin{figure}[H]
\centering
\includegraphics[width=0.35\textwidth]{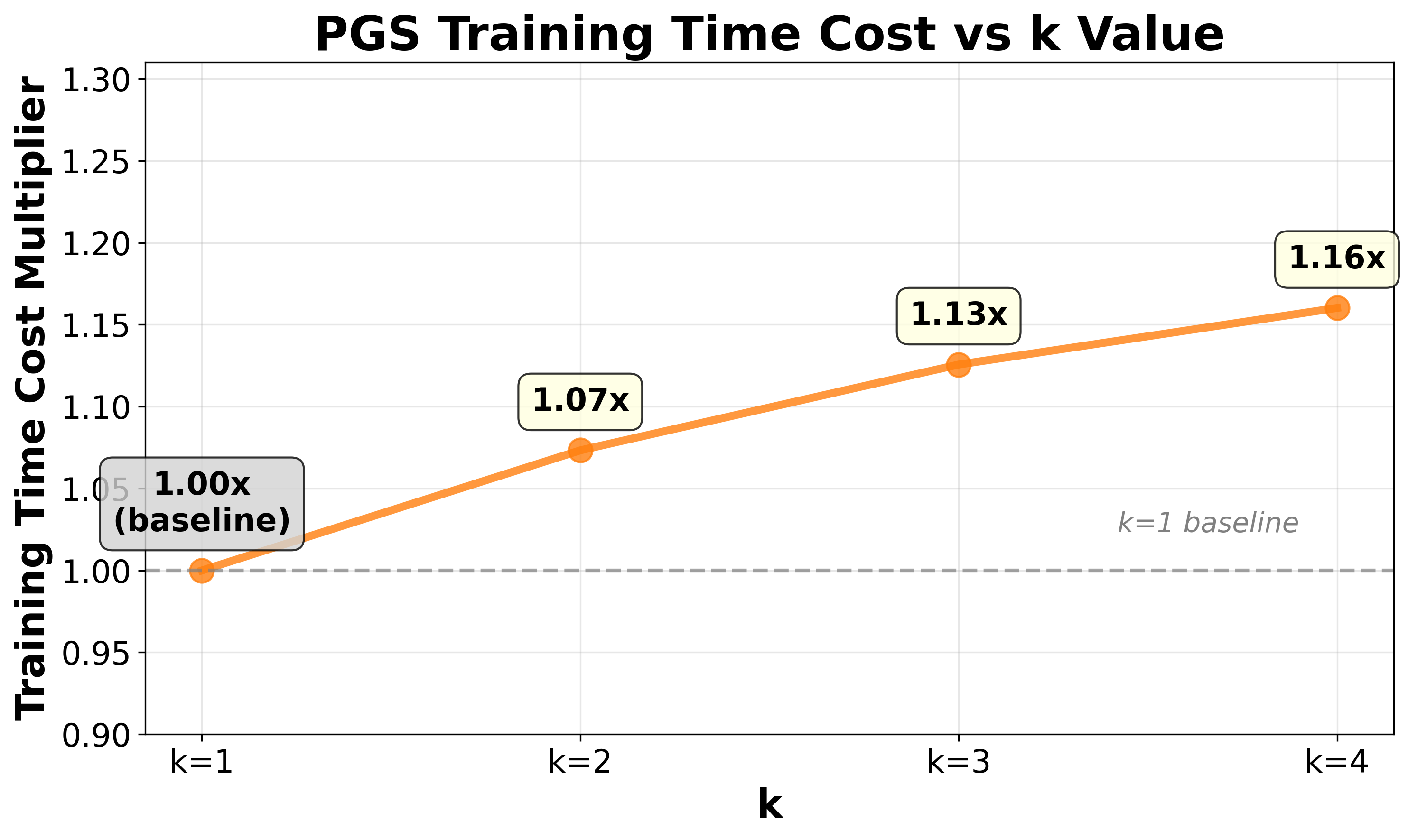}
\caption{Training Time Cost vs. $k$}
    \label{fig:cost_vs_k_appendix}
\end{figure}

Figure~\ref{fig:cost_vs_k_appendix} shows that the training‑time cost rises steadily as the downsampling ratio $k$ increases: a 7\% increase from $k=1$ to $k=2$, 13\% from $k=1$ to $k=3$, and 16\% from $k=1$ to $k=4$.


\subsubsection{Bayesian Stabilization: Full Training Dynamics}
\label{app:bayesian_grad}

Table~\ref{tab:bayesian_grad_norm} and Figure~\ref{fig:gradient_norm_vs_pg} visualize gradient dynamics during training. While both methods show decaying gradient norms as policies converge, the Bayesian-stabilized variant maintains $\sim$1.5$\times$ larger gradients ($p < 0.001$). This demonstrates that incorporating a Bayesian posterior for zero-accuracy and all-correct queries effectively mitigates the gradient dead-zone problem, yielding more stable and effective training dynamics.


\begin{figure}[H]
\centering
\includegraphics[width=0.35\textwidth]{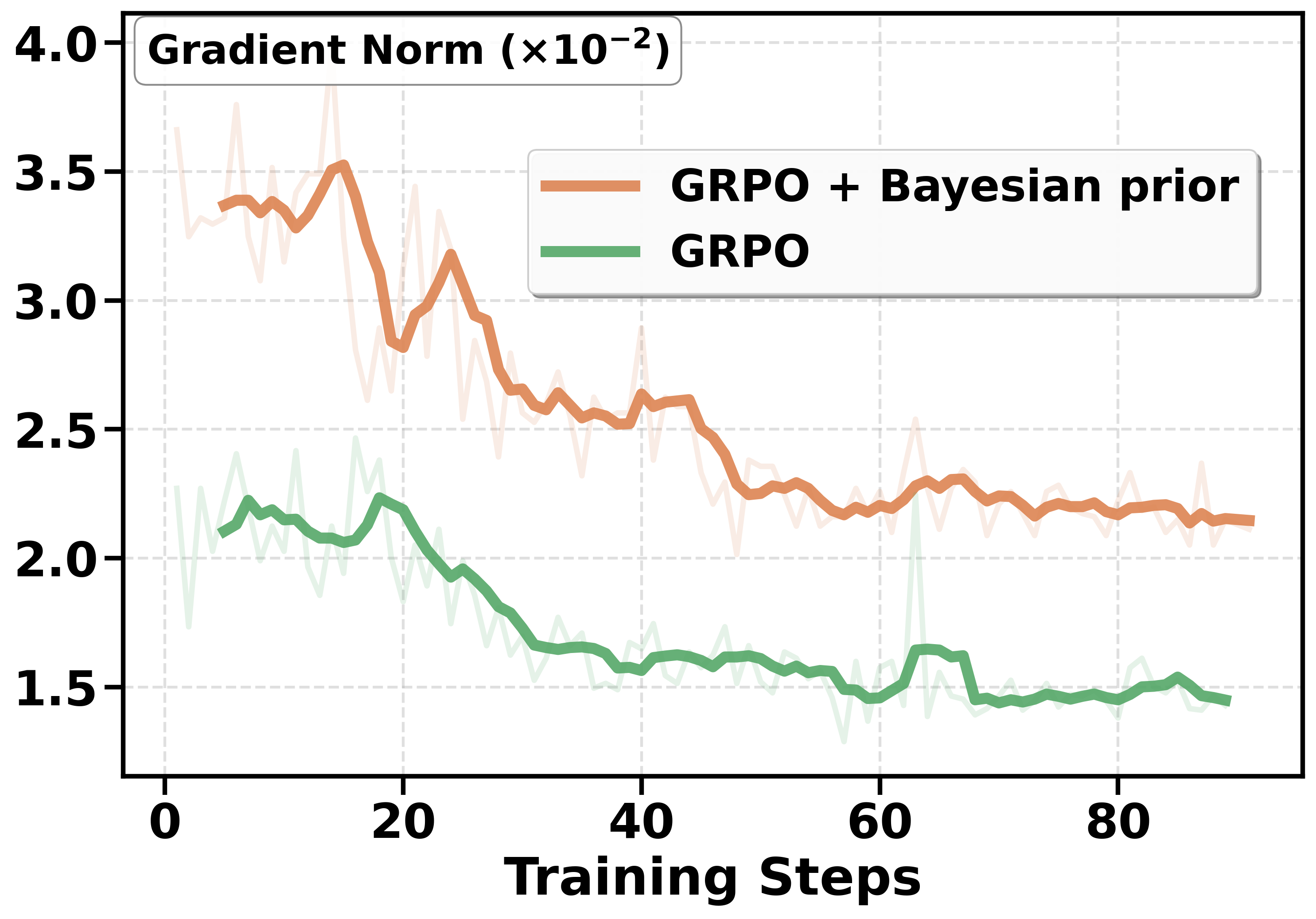}
\caption{Gradient norm: Bayesian vs GRPO.}
\label{fig:gradient_norm_vs_pg}
\end{figure}

\begin{table}[H]
\centering
\caption{Actor gradient norm ($\times 10^{-2}$) at steps 70/80/90 for GRPO + Bayesian Prior and GRPO. The Bayesian variant shows $\sim$1.5$\times$ larger gradient norms.}
\label{tab:bayesian_grad_norm}
\begin{icmltable}
\adjustbox{max width=1.2\columnwidth}{%
\begin{tabular}{@{}lccc c@{}}
\toprule
& \multicolumn{3}{c}{\textbf{Grad Norm ($\times 10^{-2}$)}} & \\
\cmidrule(lr){2-4}
\textbf{Method} & 70 & 80 & 90 & \textbf{$p$-value} \\
\midrule
\textbf{GRPO + Bayesian Prior} &
\cellcolor{table-blue}2.21 & \cellcolor{table-blue}2.22 & \cellcolor{table-blue}2.10 & — \\
\textbf{GRPO} & 1.46 & 1.38 & 1.40 & $<0.001$ \\
\bottomrule
\end{tabular}
}  
\end{icmltable}
\end{table}

\subsubsection{Effect of Discarding High-Success Rollouts}
\label{sec:ablation_high_success_appendix}
We conduct rejection sampling on high-success partial problems ($u \geq 0.5$). For example, consider a query with 8 rollouts, 5 correct and 3 incorrect. Under a $1:1$ scheme, we retain 3 correct and 3 incorrect rollouts. 
Compared to AERO (Table~\ref{tab:ablation_downsample}), 
training time is nearly identical (19.0 vs.\ 19.1 minutes per step) with similar rollout counts (1805 vs.\ 1811). 
However, accuracy drops noticeably, with Avg@8 falling from 0.3452 to 0.3321 and Pass@8 from 0.6121 to 0.6100 (Table~\ref{tab:ablation_downsample}). 
This shows that discarding correct rollouts from high-success queries degrades performance without yielding significant cost savings.

\begin{table}[H]
\centering
\caption{Efficiency and performance with rejection sampling on queries with $u\ge 0.5$.}
\label{tab:ablation_downsample}
\begin{icmltable}
\adjustbox{max width=1.6\columnwidth}{
\begin{tabular}{@{}lcccc@{}}
\toprule
\textbf{Method} & \textbf{Time (min)} & \textbf{FLOPs (PetaFLOP)} & \textbf{Avg@8} & \textbf{Pass@8} \\
\midrule
\multicolumn{5}{l}{\underline{\textbf{Qwen2.5-1.5B-Math}}} \\
GRPO & 34.3 & 3627 & 0.3401 & 0.6090 \\
DAPO & 26.1 & 2722 & 0.3405 & 0.6100 \\
AERO & 19.1 & 1811 & \cellcolor{table-blue!66} \textbf{0.3452} & \cellcolor{table-blue!66} \textbf{0.6121} \\
AERO (rejection sampling, $u\ge 0.5$) & 19.0 & 1805 & 0.3321 & 0.6100 \\
\bottomrule
\end{tabular}
}  
\end{icmltable}
\end{table}

\section{Learning Framework}
\label{sec:framework}

We implement AERO 
using the verl framework~\citep{sheng2025hybridflow}, a distributed RL system designed for LLMs. Our approach employs GRPO with specialized resource allocation mechanisms.

\paragraph{Distributed Architecture:}
The framework utilizes a Ray-based single controller design with multiple worker groups:
\begin{itemize}
    \item \textbf{ActorRollout Worker:} A hybrid worker that combines policy optimization (actor) and sequence generation (rollout) using FSDP (Fully Sharded Data Parallel) for efficient memory usage.
    \item \textbf{Critic Worker:} Computes value estimates $V(s_t)$ using a separate value network with FSDP sharding
    \item \textbf{Reference Policy Worker:} Maintains the reference policy $\pi_{\text{ref}}$ for KL regularization (optional). 
\end{itemize}

\paragraph{Model Architecture (Math \& Code):}
We train and evaluate AERO on multiple Qwen2.5-family backbones:
\begin{itemize}
    \item \textbf{Math models:} Qwen2.5-Math-1.5B and Qwen2.5-Math-7B.
    \item \textbf{Code model:} Qwen2.5-7B-Instruct-1M (code generalization setting).
\end{itemize}
All models are trained with:
\begin{itemize}
    \item \textbf{Precision:} bfloat16 mixed precision.
    \item \textbf{Parallelization:} FSDP sharding (configured per model scale).
    \item \textbf{Memory optimization:} gradient checkpointing and padding removal.
\end{itemize}
The policy outputs token probabilities:
\begin{equation}
\pi_\theta(a_t \mid s_t) = \mathrm{softmax}\!\left(f_\theta(s_t)\right).
\end{equation}

\paragraph{AERO:}
AERO implements adaptive resource allocation through iterative rejection sampling:
\begin{enumerate}
    \item \textbf{Problem Categorization:} Each problem $p$ is classified based on current accuracy $\bar{c}_p$ with rebalance threshold $\tau = 0.5$:
\begin{align}
\text{Category}(p) =
\begin{cases}
\text{Zero} & \text{if } \bar{c}_p = 0, \\
\text{Partial} & \text{if } 0 < \bar{c}_p < \tau, \\
\text{Good} & \text{if } \tau \le \bar{c}_p < 1, \\
\text{Perfect} & \text{if } \bar{c}_p = 1.
\end{cases}
\end{align}
    
\item \textbf{Iterative Rejection Sampling:} 
For up to $K_{\text{max}} = 10$ iterations, we update the per-problem allocation as
\begin{align}
n_p^{(k+1)} =
\begin{cases}
n_p^{(k)} + \Delta n, & \text{if } \text{Category}(p) = \text{Zero}, \\[4pt]
\text{balance}\big(n_p^{(k)}\big), & \text{if } \text{Category}(p) = \text{Partial}, \\[4pt]
n_p^{(k)}, & \text{otherwise},
\end{cases}
\end{align}
where $\Delta n = 2$ is the rejection increment, and $\text{balance}(\cdot)$ adjusts the sample count to equalize correct and incorrect rollouts.

    \item \textbf{Budget Constraints:} Sample allocation respects both per-problem and total limits:
    \begin{align}
    n_p &\leq n_{\text{max}} = 32 \quad \forall p \\
    \sum_{p \in P} n_p &\leq N_{\text{total}} = n_{\text{budget}} \times |P|
    \end{align}
    where $n_{\text{budget}} = 16$ controls the total computational budget.
\end{enumerate}

\paragraph{Bayesian Posterior Updates (Robust Success-Rate Estimation).}
AERO maintains a Bayesian posterior over each prompt's success probability $u$ to obtain a robust estimate even in all-fail ($c=0$) or all-correct ($c=n$) cases.
We place a Beta prior on $u$,
\begin{equation}
u \sim \mathrm{Beta}(\alpha_0,\beta_0),
\end{equation}
and observe $c$ correct rollouts out of $n$ samples. By Beta--Binomial conjugacy, the posterior is
\begin{equation}
p(u \mid c,n) \sim \mathrm{Beta}(\alpha_0+c,\ \beta_0+n-c).
\end{equation}
We use the posterior mean as a smoothed success-rate estimate:
\begin{equation}
\tilde u \;=\; \mathbb{E}[u \mid c,n] \;=\; \frac{c+\alpha_0}{n+\alpha_0+\beta_0},
\end{equation}
where we set a weak prior $\alpha_0=\beta_0=1$ in practice.
This posterior smoothing ensures $\tilde u \in (0,1)$ even when $c\in\{0,n\}$. 


\paragraph{GRPO Training:}
The framework uses GRPO without critic networks:
\begin{align}
\mathcal{L}_{\text{GRPO}} &= \mathbb{E}_{(s,a) \sim \pi_{\theta_{\text{old}}}} \left[ \min\left( r_t(\theta) A_t, \text{clip}(r_t(\theta), 1-\epsilon, 1+\epsilon) A_t \right) \right]
\end{align}
where advantages $A_t$ are computed using group-relative baselines rather than learned value functions, and $r_t(\theta) = \frac{\pi_\theta(a_t|s_t)}{\pi_{\theta_{\text{old}}}(a_t|s_t)}$ is the importance ratio.
\paragraph{Hyperparameter Configuration.}
Table~\ref{tab:hyperparameters} summarizes the key hyperparameters used in our AERO implementation for both math and code domains. 
Unless specified, we use the same AERO allocation parameters ($n_{\text{total}}$, $n_{\text{explore}}$, $n_{\max}$) across domains.

\begin{table}[h]
\centering
\caption{AERO hyperparameters for math and code experiments.}
\label{tab:hyperparameters}
\setlength{\tabcolsep}{4pt}
\renewcommand{\arraystretch}{1.05}
\small
\begin{tabular}{lcc}
\toprule
\textbf{Parameter} & \textbf{Math} & \textbf{Code} \\
\midrule
\multicolumn{3}{c}{\textit{Model / System}} \\
Backbone & Qwen2.5-Math-(1.5B/7B) & Qwen2.5-7B-Instruct-1M \\
Precision & bfloat16 & bfloat16 \\
FSDP Sharding & (e.g., 2--4 way) & (e.g., 4 way) \\
Grad Checkpointing & True & True \\
\midrule
\multicolumn{3}{c}{\textit{Optimization}} \\
Learning rate & 5e-6 & 5e-6 \\
Warmup ratio & 0.03 & 0.03 \\
Grad clipping & 1.0 & 1.0 \\
Clip ratio $\epsilon$ & 0.2 & 0.2 \\
Entropy coef. & 1e-3 & 1e-3 \\
GRPO epochs & 1 & 1 \\
\midrule
\multicolumn{3}{c}{\textit{AERO Sampling}} \\
$n_{\text{explore}}$ & 8 & 8 \\
$n_{\text{total}}$ & 16 & 16 \\
$n_{\max}$ & 32 & 32 \\
$\Delta n$ & 2 & 4 \\
$K_{\max}$ & 10 & 3 \\
Threshold $\tau$ & 0.5 & 0.5 \\
\midrule
\multicolumn{3}{c}{\textit{Sequence / Data}} \\
Train batch size & 256 & 128  \\
Val batch size & 150 &  200 \\
Max prompt length & 1024 & 1024  \\
Max response length & 1536 & 1536  \\
Sampling temperature & 1.0 & 1.0 \\
\midrule
\multicolumn{3}{c}{\textit{Bayesian Smoothing}} \\
$\alpha_0,\beta_0$ & 1, 1 & 1, 1 \\
Advantage std & True & True \\
\bottomrule
\end{tabular}
\end{table}

\section{Data Collection and Prompt Engineering}
\label{sec:data}

AERO operates on structured mathematical reasoning datasets with systematic prompt engineering and automatic reward computation. 
Training data is collected on-policy during the GRPO training loop, with careful control over prompt design and reward evaluation.

\paragraph{System Prompt Design.}\mbox{}\\
\begin{greenbox}
\textbf{Math System Prompt:}  
"Your task is to solve the given question step by step. You should conduct a systematic, thorough reasoning process before providing the final answer. 
This involves analyzing, summarizing, exploring, reassessing, and refining your reasoning process through multiple iterations. 
Each reasoning step should include detailed analysis, verification, and refinement of ideas.  
Include the final answer in \texttt{\textbackslash boxed\{\}} for closed-form results such as multiple-choice or numerical answers."
\end{greenbox}
\vspace{5pt}\noindent
\begin{greenbox}
\textbf{Code System Prompt:}  
"When tackling complex reasoning tasks, you have access to the following actions.  
Use them as needed to progress through your thought process.

[ASSESS]  
[ADVANCE]  
[VERIFY]  
[SIMPLIFY]  
[SYNTHESIZE]  
[PIVOT]  
[OUTPUT]  

You should strictly follow the format below:  

[ACTION NAME]  
Your action step 1  
Your action step 2  
...  

Next action: [NEXT ACTION NAME]  

After reasoning, output the final executable Python code enclosed in a single code block, e.g.,  

\texttt{\string`\string`\string`python}\\
\textit{your solution code}\\
\texttt{\string`\string`\string`}"
\end{greenbox}

\paragraph{Example Problem.}
A representative training sample is shown below:

\begin{greenbox}
\textbf{User Query:} 
"A ship traveling along a river has covered 24 km upstream and 28 km downstream. 
For this journey, it took half an hour less than for traveling 30 km upstream and 21 km downstream... 
Determine the speed of the ship in still water and the speed of the river."

\textbf{Expected Response:} 
"Let $v$ be the speed of the ship in still water... [step-by-step reasoning] ... 
The speed of the ship in still water is $\boxed{10}$ km/h and the speed of the river is $\boxed{4}$ km/h."
\end{greenbox}

\paragraph{Example Problem.}\mbox{}\\[-0.3em]
A representative training sample (code) is shown below:

\begin{greenbox}
\textbf{User Query:}
"Given base $n$ and a string $k$ consisting of digits in $[0,n-1]$, interpret $k$ as the concatenation of base-$n$ digits written in decimal (digits $\ge 10$ are written as their decimal numerals without separators). Find the minimum $x \le 10^{18}$ whose base-$n$ representation produces exactly $k$."

\textbf{Expected Response:}
\begin{verbatim}
[ASSESS]
Parse k into tokens (each token is one base-n digit written in decimal,
no leading zeros unless token=="0").
Next action: [ADVANCE]

[ADVANCE]
DP over positions: dp[i]=min value for prefix k[0:i].
Transition by choosing next token k[i:j] as digit d<n:
dp[j]=min(dp[j], dp[i]*n + d), discard if >1e18.
Next action: [VERIFY]

[VERIFY]
Enforce token rules (no leading zeros), and cap transitions at 1e18.
Next action: [OUTPUT]

[OUTPUT]
```python
# ... final accepted solution ...
\end{verbatim}
\end{greenbox}

\paragraph{On-Policy Data Collection.}
During training, AERO collects experience tuples $(s_t, a_t, r_t, \log \pi_\theta(a_t|s_t))$:
\begin{itemize}
    \item $s_t$: Formatted prompt containing the system instruction and question
    \item $a_t$: Generated reasoning trace following the structured format
    \item $r_t$:$r_t \in \{0,1\}$, the Binary correctness reward
    \item $\log \pi_\theta(a_t|s_t)$: Log-probability of the generated trajectory
\end{itemize}

\section{Reward System}
\label{sec:reward}


\subsection{Multi-Domain Reward Framework}
AERO uses verifiable, domain-specific evaluators to provide reliable rewards across heterogeneous reasoning tasks. 
Given a model response $s$ and ground-truth answer $g$, the reward is computed by invoking the appropriate verifier for the task domain:
\begin{equation}
r_t = \mathcal{V}_d(s,g), \qquad d \in \{\text{math},\text{code},\text{science/QA}\},
\end{equation}
where $\mathcal{V}_{\text{math}}$ checks mathematical equivalence, $\mathcal{V}_{\text{code}}$ executes unit tests for program synthesis, and $\mathcal{V}_{\text{science/QA}}$ performs answer matching for structured QA (e.g., multiple-choice). 
Unless otherwise specified, we use a binary reward $r_t \in \{0,1\}$ indicating whether the response is verified as correct.

\subsection{Token-Level Reward Assignment}

Following standard RL practice, rewards are assigned sparsely to the final token of each response:
\begin{align}
\text{reward}[i, \text{len} - 1] = r_t.
\end{align}

\end{document}